\documentclass[10pt,twocolumn,letterpaper]{article}

\usepackage{cvpr}              
\usepackage{gensymb}
\usepackage{amssymb}
\usepackage{multirow}
\usepackage{graphicx}
\usepackage{color}
\usepackage{float}
\usepackage{ulem}

\usepackage{algorithm}
\usepackage{algorithmic}

\usepackage{amsmath, amssymb, amsthm}
\usepackage{mathtools}
\newtheorem{definition}{Definition}
\newtheorem{proposition}{Proposition}
\newtheorem{theorem}{Theorem}

\usepackage{caption}
\definecolor{cvprblue}{rgb}{0.21,0.49,0.74}
\usepackage[table]{xcolor}

\definecolor{best}{RGB}{198,239,206}     
\definecolor{second}{RGB}{226,239,218}   
\definecolor{third}{RGB}{245,248,225}    
\newcommand{\method}{SOAR}
\definecolor{cvprblue}{rgb}{0.21,0.49,0.74}
\usepackage[pagebackref,breaklinks,colorlinks,allcolors=cvprblue]{hyperref}

\def\confName{CVPR}
\def\confYear{2026}

\title{SOAR: Regre\textcolor{blue}{s}si\textcolor{blue}{o}n-based LiD\textcolor{blue}{A}R \textcolor{blue}{R}elocalization for UAVs}

\author{
    Hengyu Mu$^{1,2,*}$, Jianshi Wu$^{1,2,*}$, Yuxin Guo$^{1,2,*}$, XianLian Lin$^{1,2}$, Qingyong Hu$^{3}$, \\Sheng Ao$^{1,2}$, Chenglu Wen$^{1,2}$, Cheng Wang$^{1,2}$\\
    $^{1}$ Fujian Key Laboratory of Sensing and Computing for Smart Cities, Xiamen University\\
    $^{2}$ Key Laboratory of Multimedia Trusted Perception and Efficient Computing, \\Ministry of Education of China, Xiamen University\\
    $^{3}$ Department of Computer Science at the University of Oxford\\
    \{23020250157845, wujianshi, guoyuxin\}@stu.xmu.edu.cn, lxl@xmu.edu.cn, \\huqingyong15@outlook.com, \{aosh, clwen, cwang\}@xmu.edu.cn
}

\begin{document}
\twocolumn[{%
\maketitle

\begin{figure}[H]
\hsize=\textwidth 
\centering
\includegraphics[width=1.0\textwidth]{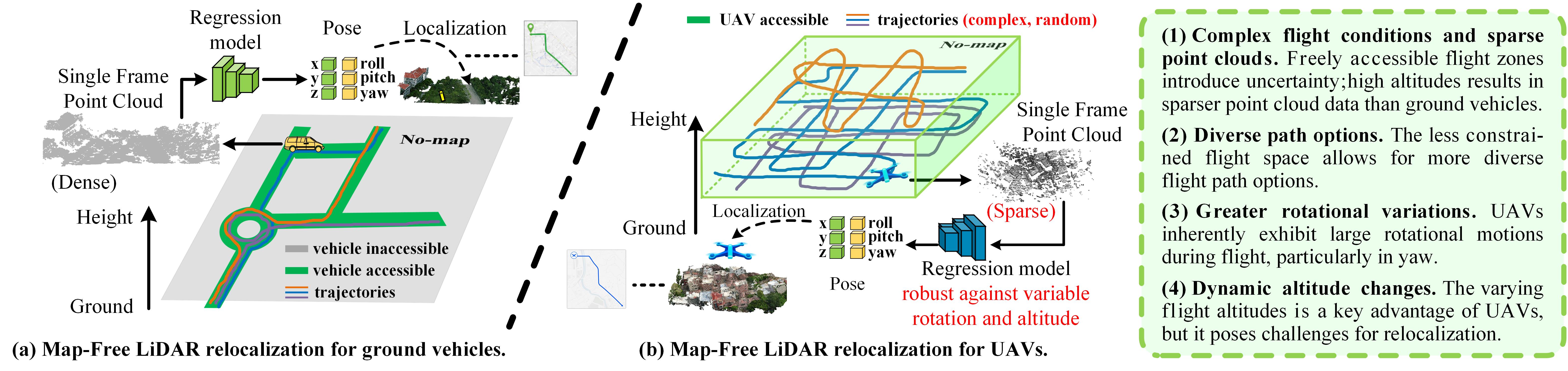}
\caption{The challenges of UAV's map-free relocalization compared to vehicle. Compared to map-free vehicle relocalization, UAV relocalization faces greater challenges.}
\label{fig_1}
\end{figure}
}]

\begingroup
\renewcommand\thefootnote{\fnsymbol{footnote}}
\footnotetext[1]{Equal contribution.}
\endgroup


\begin{abstract}

Regression-based LiDAR relocalization has recently emerged as a promising solution for high-precision positioning in GNSS-denied environments. However, these methods are primarily tailored to autonomous driving, exhibiting significantly degraded accuracy in unmanned aerial vehicle (UAV) scenarios due to arbitrary pose variations and irregular flight paths. In this paper, we propose \method{}, a regre\textbf{s}si\textbf{o}n-based LiD\textbf{A}R \textbf{r}elocalization framework for UAVs. Specifically, we introduce a locality-preserving sliding window attention module with locally invariant positional encoding to capture discriminative geometric structures robust to viewpoint changes. A coordinate-independent feature initialization module is further designed to eliminate sensitivity to global transformations. Furthermore, most existing UAV datasets are limited to evaluate LiDAR relocalization in real-world, due to the lack of synchronized LiDAR scans, accurate 6-DoF poses, or multiple traversals. Thus, we construct a large-scale UAV LiDAR localization dataset with 4 scenes and 13 irregular paths exhibiting rotation and altitude variations, providing a more realistic benchmark for UAVs. Extensive experiments demonstrate that our method achieves state-of-the-art performance, improving the localization success rate by \textbf{40\%} and reducing mean error \textbf{over 10m} on UAVLoc. Our code and dataset will be released soon

\end{abstract}    
\section{Introduction}
\label{sec:intro}

\begin{figure*}
  \centering
  \includegraphics[width=1.0\linewidth]{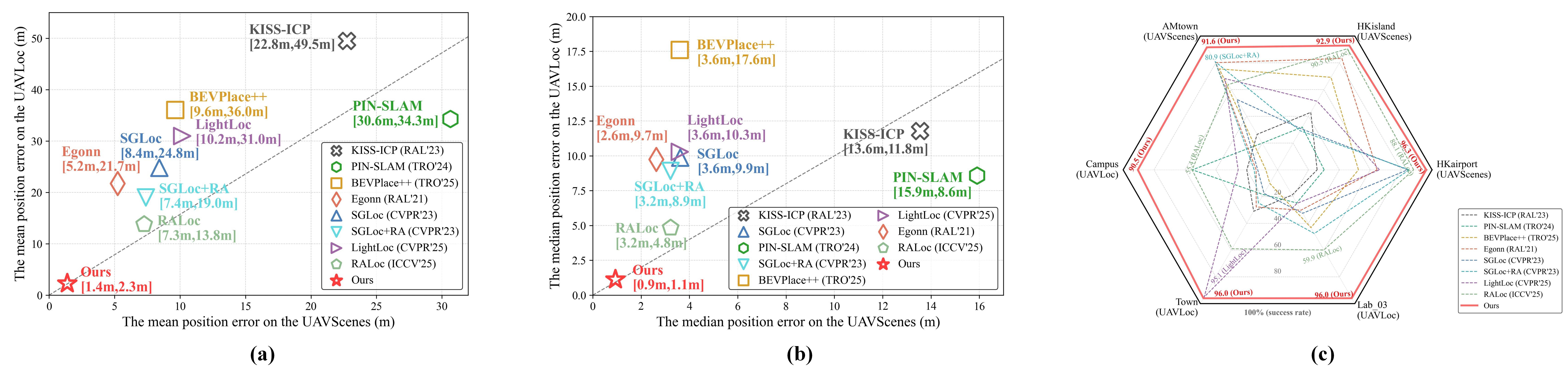}
  \caption{The results of the mean and median position error (m), the success rate (Thresholds=5m,5\degree) on UAVScenes and UAVLoc (ours). Our method achieves the best performance.}
\label{fig1_2}
\end{figure*}

Recently, unmanned aerial vehicles have emerged as key enablers of the low-altitude economy~\cite{add_37, 1_2}, supporting a wide range of applications such as terrain exploration, urban inspection, and emergency response~\cite{1_1,1_2, add_36}. During UAV operations, accurate real-time localization is essential for reliable mission execution. Most existing systems rely on GNSS signals~\cite{add_15} to determine the location of UAVs. However, GNSS signals can be easily obstructed in urban environments or become entirely unavailable. Therefore, it is crucial to develop efficient and robust localization methods for UAVs.

Benefiting from LiDAR’s robustness to environmental interference and illumination changes~\cite{add_9, add_12, nuscenes}, LiDAR-based localization provides a promising alternative to address the aforementioned problem~\cite{add_4, 1_8, add_2, add_3, PiLoT}. Several studies have explored LiDAR odometry~\cite{add_13, pinslam, add_31, dtd} for UAV localization~\cite{1_3,1_4,add_10,add_11}. However, such methods suffer from accumulated drift over time, leading to increasing localization errors and limiting their applicability to long-term UAV navigation in real-world scenarios~\cite{ntu, 5-1}.

With advances in sensor technology, acquiring large-scale, high-precision LiDAR maps has become relatively straightforward. As a result, numerous methods focus on 6-degrees-of-freedom (6-DoF) relocalization~\cite{1_5,spinnet,spinnet-tpami, add_7, 53_casspr}. Despite achieving promising performance, these methods require expensive 3D map storage and incur high communication overhead~\cite{1_6,pinslam,buffer, add_6}, which poses severe challenges for resource-constrained platforms such as UAVs~\cite{1_8}. This motivates the development of relocalization methods that avoid explicit map construction and retrieval. Following prior work~\cite{map_free}, we use the term \textit{map-free} to denote methods that do not maintain explicit geometric maps during inference. Instead, scene priors are implicitly encoded within network parameters through training, enabling pose estimation directly from input LiDAR scans via learned scene representations. In autonomous driving~\cite{add_16,DRIR}, many methods have explored map-free LiDAR relocalization~\cite{RALoc,Lightloc,lisa, add_8, Opal}, which train deep neural networks to memorize scenarios and directly regress the global pose for each input LiDAR scan. Unfortunately, such methods cannot be directly applied to drone platforms. As illustrated in Fig.~\ref{fig_1}, unlike ground vehicles, UAVs operate under (1) more complex flight conditions, (2) more diverse path and altitude options leading to sparse overlap, and (3) greater 6-DoF rotational variations (especially in yaw direction), which differs fundamentally from the ground vehicle scenario.

These challenges require jointly modeling local geometric structures and global invariance, which are not addressed by existing methods. Therefore, we design a scene coordinate regression-based LiDAR relocalization framework for UAVs under the implicit map-free setting. To ensure robustness under UAV conditions, our model satisfies these properties: 1) Rotation invariance in yaw. The model remains unaffected by yaw rotations that commonly occur during UAV flight. 2) Altitude robustness. For LiDAR scans captured at varying altitudes, the model encodes them into similar representations, thereby mitigating the influence of altitude variations. 3) Local descriptiveness. Since UAV trajectories often yield point clouds with partial overlap, the model must exhibit strong discriminative capability for local geometric features to handle complex and irregular flight paths.

Specifically, we propose \method{}, a map-free LiDAR relocalization framework for high-accuracy and robust pose estimation for UAVs. To handle large yaw and altitude variations, we introduce the Locality-Preserving Sliding Window Attention (LoSWAtt) module to extract locally geometric features. Moreover, to encode discriminative local representations, we propose Softmax-free module, which is simple yet remarkably effective. Finally, to evaluate relocalization performance in real-world, we build a LiDAR-equipped drone platform and collect a large-scale outdoor dataset, including irregular and diverse flight paths in four scenes. As shown in Fig.~\ref{fig1_2}, \method{} achieves state-of-the-art performance on two UAV datasets. Overall, our contributions are summarized as follows:

\begin{itemize}
\item As the first systematic study of map-free LiDAR relocalization for UAVs, we analyze the key challenges in this setting, and provide a tailored solution.
\item We propose a new LiDAR relocalization framework that jointly extracts discriminative geometric features and global invariance, which is robust to azimuthal and altitude changes.
\item We collect a large-scale LiDAR localization dataset with diverse scenes and irregular trajectories, providing a realistic and challenging benchmark for future research.
\item Extensive experiments demonstrate that our method significantly outperforms existing approaches and offer insights into the design choices behind it.
\end{itemize}

\begin{figure*}
  \centering
  \includegraphics[width=1\linewidth]{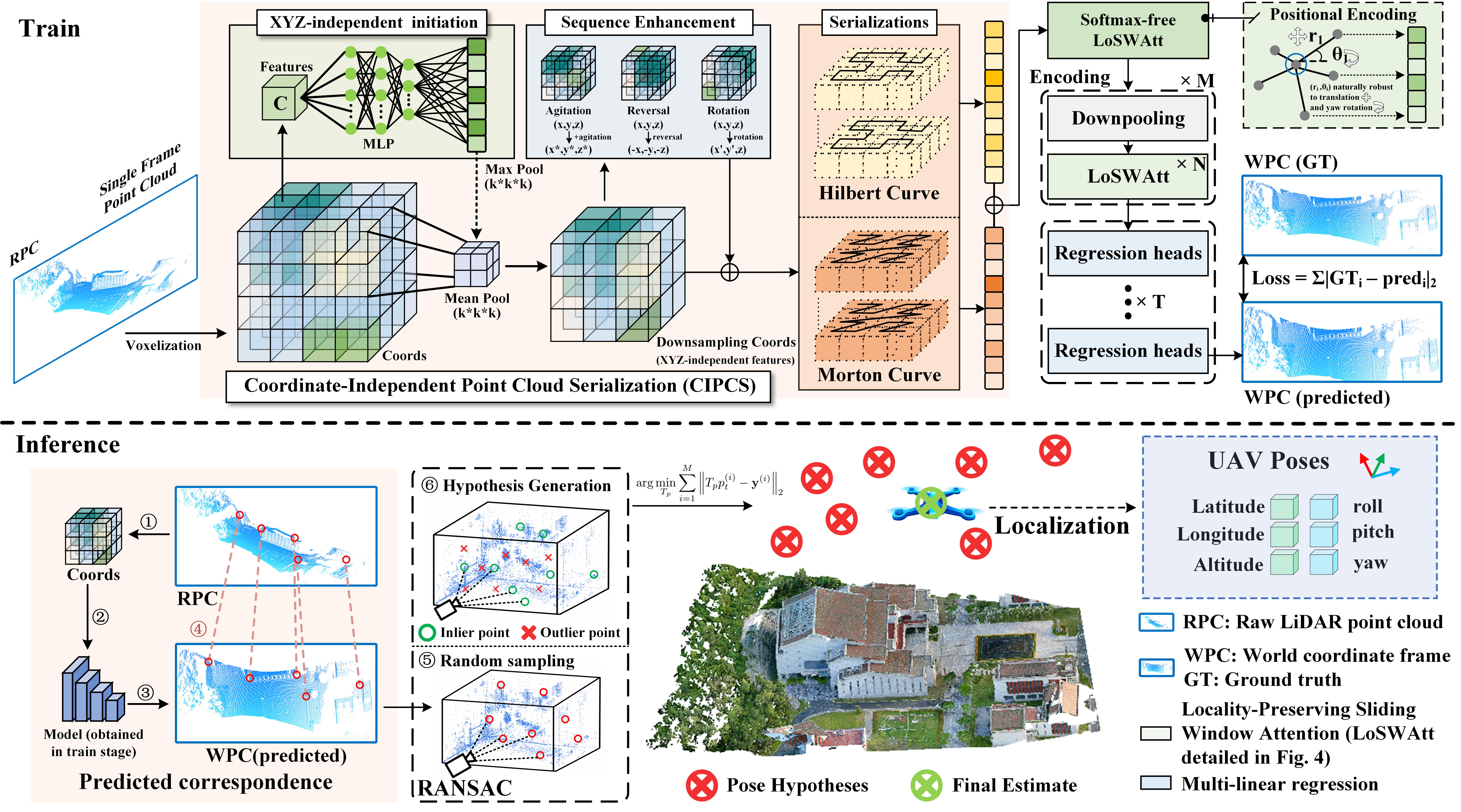}
  \caption{\textbf{The pipeline of \method{},} which consists of two main components:(1) the CIPCS module initializes features to serializations. (2) the LoSWAtt module encodes robust features (detailed in Fig. \ref{fig3}). RANSAC is used for 6-DoF pose estimation in inference stage. Detailed implementation, including the specific values for M, N and T, can be found in the appendix.}
\label{fig2}
\end{figure*}

\section{Related work}
\subsection{Conventional Relocalization}


Conventional relocalization methods \cite{add_23,add_24,add_25} aim to align the query point cloud with a pre-constructed 3D map. These approaches are typically categorized into two main types: retrieval-based methods \cite{29_Minkloc3d, 48_pointnetvlad, 53_casspr} and matching-based methods \cite{42_geometric}. Retrieval-based methods treat relocalization as a task of recognizing a place. To do this, they need a pre-built database of feature descriptors and use similarity queries to find the most similar point cloud in the database \cite{59_std}. On the other hand, matching-based methods retrieve candidate frames based on similarity in a descriptor database or 3D map. Then, they match features of query frame to those of reference map \cite{33_extend}. However, both types inherently rely on pre-built maps, leading to challenges in data storage and communication efficiency.

\subsection{Regression-based Localization}

Recently, deep learning-based relocalization methods \cite{add_21, add_22, add_26, add_27} have achieved remarkable progress by formulating the task as a regression problem, thereby enabling direct end-to-end predicting 6-DoF poses. Specifically, PointLoc \cite{1_9} pioneered a LiDAR-based Absolute Pose Regression (APR) method, realizing map-free relocalization for vehicles. Subsequently, HypLiLoc \cite{hypliloc} leveraged multi-level features to capture more discriminative features. DiffLoc \cite{diffloc} improved accuracy by transforming pose regression process into multi-iterative steps through a diffusion model \cite{diffusion, add_34}.

However, as noted in \cite{SGLoc}, APR methods primarily emphasize high-dimensional pose–feature mappings, which constrains their accuracy—particularly in large-scale outdoor environments \cite{RALoc}. To overcome this, Scene Coordinate Regression (SCR) methods~\cite{add_32, add_33} were introduced, which predicts the world-coordinate position of each point, and subsequently employ RANSAC to estimate the final 6-DoF pose. SGLoc \cite{SGLoc} was the first to integrate the SCR framework into LiDAR-based relocalization. Subsequently, LiSA \cite{lisa} incorporated semantic information to reduce the influence of noisy or disruptive points within the scene, further improving localization accuracy. More recently, RALoc \cite{RALoc} has focused on mitigating rotation-related challenges during pose estimation. These methods work well for vehicle-based LiDAR relocalization, but their performance degrades significantly in UAV scenarios. 
Therefore, we aim to address challenges of UAV relocalization.

\section{Method}



Although SCR-based methods have demonstrated strong performance on ground-vehicle datasets~\cite{SGLoc,RALoc}, as illustrated in Fig.~\ref{fig_1}, UAVs exhibit more complex flight trajectories, larger azimuthal changes, and greater altitude variations than ground vehicles, which significantly degrade the performance of existing localization methods tailored to autonomous driving. To address these challenges, we propose \method{}, which is robust to yaw rotation and height variations, enabling precise relocalization in UAV scenarios. Moreover, we collect UAVLoc, a large-scale UAV LiDAR dataset, to evaluate performance under realistic scenes and alleviate the limitations of existing benchmarks.


\subsection{\method{}}

\noindent\textbf{Framework.} Similar to SCR-based vehicle methods \cite{SGLoc}, we define the overall \method{}~framework as follows. Given a query point cloud $\mathbf{P}_t \in \mathbb{R}^{N \times 3}$, we aim to estimate the UAV’s global 6-DoF pose $\mathbf{T}$. As shown in Fig. \ref{fig2}, \method{} first learns to regress point-wise correspondences from the raw point cloud to predict its world coordinates $\mathbf{Y}$. Then, this mapping is realized by a learnable model $\mathcal{F}$, i.e.
\begin{equation}
\mathbf{Y} = \mathcal{F}(\mathbf{P}_t).
\end{equation}
During inference, we leverage the predicted correspondences ($\mathbf{P}_t ,\mathbf{Y}$) to estimate the pose. We employ RANSAC~\cite{RANSAC, add_35} to robustly sample $M$ candidate correspondence sets and solve for the transformation that minimizes the following energy function:
\begin{equation}
\mathbf{T}^* = \arg\min_{\mathbf{T}} \sum_{i=1}^{M} \left\| \mathbf{T}\mathbf{p}_t^{(i)} - \mathbf{y}^{(i)} \right\|_{2}
\end{equation}
where $\mathbf{T}$ is a $4 \times 4$ matrix representation of the pose, $i$ is the index of the point, $\mathbf{p}_t^{(i)}$ is a point in $\mathbf{P}_t$, and $\mathbf{y}^{(i)}$ is its corresponding predicted global coordinate in $\mathbf{Y}$.

Specifically, as illustrated in Fig.~\ref{fig2}, \method{}~is divided into two main components: (1) Coordinate-Independent Point Cloud Serialization (CIPCS), which initializes point cloud features through a serialization process that is independent of their original coordinates (XYZ). (2) Locality-Preserving Sliding Window Attention (LoSWAtt) module, which employs a sliding-window local attention encoder together with yaw- and altitude-invariant positional encoding, enabling the encoding of features that remain robust to azimuth and altitude variations.

\noindent\textbf{Coordinate-Independent Point Cloud Serialization.} To address yaw and altitude variations in UAV scenarios, we designed a Coordinate-Independent Point Cloud Serialization (CIPCS) method (shown in Fig. \ref{fig2}). Due to the complex flight trajectories of UAVs, the relative XYZ coordinates of the same point cloud often undergo substantial and frequent variations. To address this, we replace the XYZ coordinates with a constant C as the raw feature for each point. The raw features are then expanded  via an MLP (Multi-Layer Perceptron), followed by downsampling of both voxels and features to obtain the final point cloud feature $\mathbf{F}$, i.e.  
$\mathbf{F} = \text{Downsampling}\Big( \big\{ \varphi(\mathbf{C}) \big\}_{i=1}^N \Big),$
where $N$ is the number of points, $i$ is the index of the point, $\varphi$ denotes multi-layer perceptron, and C is 1 in this paper. Although the raw coordinates are replaced with constants, geometric relationships are implicitly preserved through voxelization and serialization, which shifts the burden of representation learning from raw coordinates to structural relationships. After that, we serialize the point cloud voxels using Hilbert and Morton curves which have strong locality-preserving properties~\cite{pointtv3}.

\noindent\textbf{Locality-Preserving Sliding Window Attention.} To initialize point cloud features with invariance to yaw and altitude variations, we propose the Locality-Preserving Sliding Window Attention (LoSWAtt) module, as illustrated in Fig.~\ref{fig3}. Specifically, we compute a local attention mechanism over the point cloud sequence using a sliding window of size $k$. This sliding window design not only enables the model to extract locally invariant geometric features but also effectively reduces the spatial complexity from $O(n^2)$ to $O((2k+1)n)$. For the sliding window at step $i$, we define local input feature $\mathbf{F}^w_i$ as:
\begin{equation}
\mathbf{F}^w_i = \big\{ \mathbf{f}_{i - k}, \mathbf{f}_{i - k + 1}, \dots, \mathbf{f}_{i}, \dots, \mathbf{f}_{i + k} \big\}
\label{eq3}
\end{equation}
Subsequently, we encode the $\mathbf{F}^w_i$ using an attention mechanism with a positional bias that preserves local invariance. The specific process is shown in Eq. \ref{eq4}.
\begin{equation}
\mathbf{f}'_i = \text{Softmax}\Bigg(
\frac{\mathbf{Q}\mathbf{K}^\top}{\sqrt{D}} + \frac{\mathbf{Q}_2 \mathbf{K}_2^\top}{\sqrt{D_2}}
\Bigg) \mathbf{V},
\label{eq4}
\end{equation}
where D and D$_2$ denote the dimensions of Q and Q$_2$, and $\mathbf{Q},\mathbf{K},\mathbf{V}=P_{Q,K,V}(\mathbf{F}^w_i), \mathbf{Q}_2 = \varphi_2(\mathbf{f}_i)$. Simultaneously, we compute the relative position $r$ of each point within the sliding window relative to central point, and use the pitch angle $\theta$, independent of yaw, as $\mathbf{K}_2$, thereby providing encoder with positional encoding that is invariant to both yaw and altitude variations, which is shown in Eq. \ref{eq5}.
\begin{equation}
\mathbf{K}_2 = \varphi_3\left[ (r_{i-k}, \theta_{i-k}), \dots, (r_{i+k}, \theta_{i+k}) \right],
\label{eq5}
\end{equation}
Then, we obtain predicted world coordinates $\mathbf{Y}'$ for each point using multiple linear regression heads.

\begin{figure*}
  \centering
\includegraphics[width=1\linewidth]{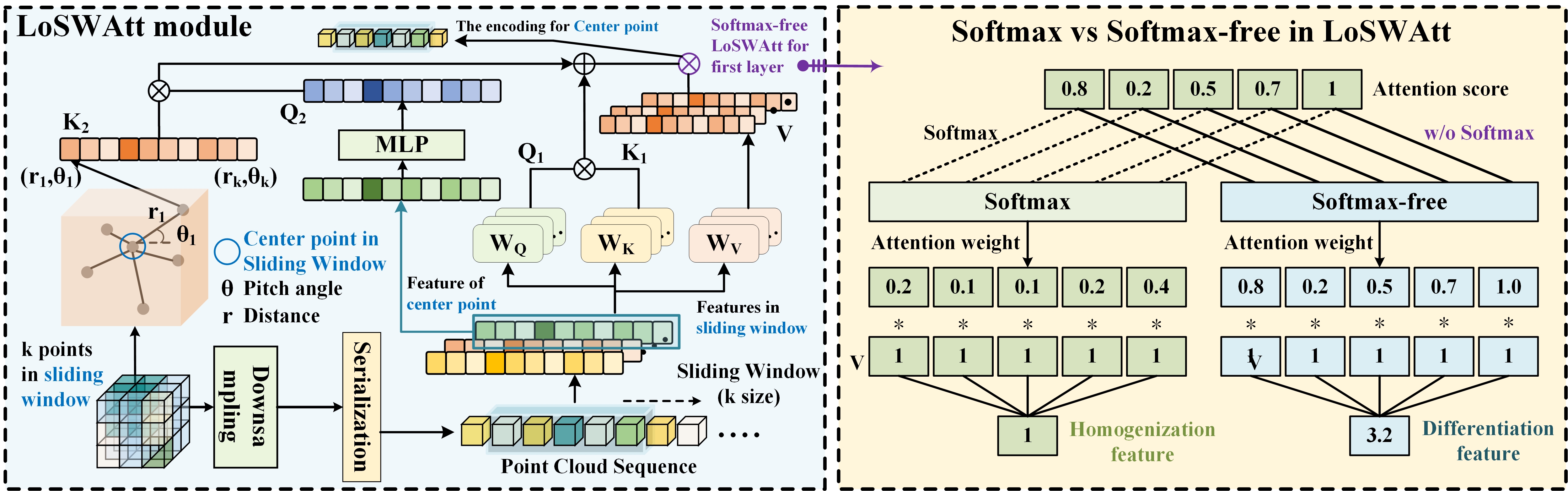}
  \caption{The Locality-Preserving Sliding Window Attention module, designed to encode local geometric features. Notably, LoSWAtt's first layer employs Softmax-free to produce distinct features.}
\label{fig3}
\end{figure*}

\noindent\textbf{Softmax-free LoSWAtt.} Although the CIPCS eliminates the effects of XYZ rotation and translation, the resulting feature points across different point clouds are not distinctive, because the constant C is transformed into the same constant C$_2$ through the $\varphi$. Consequently, these features cannot be directly used in subsequent regression encoding. Therefore, we achieve distinctive point cloud features by designing the first-layer LoSWAtt module \textbf{without Softmax}, which is simple but necessary.

Specifically, as illustrated in Fig.~\ref{fig3} (w/o Softmax), the point cloud feature $\mathbf{F}$ is treated as a constant C$_2$ and thus remains constant after linear projection. Consequently, Q, K, Q$_2$, and V in Eq.~\ref{eq4} all collapse to constant vectors. Although K$_2$ introduces variation into the attention scores, the Softmax normalization suppresses these variations, leading to degraded attention expressiveness and resulting in highly homogeneous features. Therefore, this design avoids normalization-induced feature collapse under homogeneous inputs. This phenomenon is clearly verified by the ablation results presented in Tab.~\ref{table4}, where we observed the performance of \method{}~degrades significantly when the first LoSWAtt module is used with Softmax. More analysis and theoretical proof of Softmax-free LoSWAtt can be found in the appendix.

\noindent\textbf{Loss function.} During training, the point cloud with global poses predicted by the network $\mathcal{F}$ is optimized by $\mathcal{L}_{L1}$. Specifically, we minimize the average L1 distance between the predicted scene coordinates $y_i$, and the ground truth of scene coordinates $y^{*}_i$:
\begin{equation}
\mathcal{L}_{L1} = \frac{1}{N} \sum_{i=1}^{N} \left\| \mathbf{y}_i - \mathbf{y}_i^* \right\|_{1}
\end{equation}

\begin{table*}
  \caption{\textbf{Comparison with other UAV datasets.} 
  ``Single-frame PC Acquire'' represents whether single-frame point cloud data is acquired. ``Vertical Resolution'' denotes the vertical resolution of the LiDAR. ``Points/sec'' refers to the peak Points/sec specified by the manufacturer. ``Multiple Traversals'' represents multiple trajectories in each scene that overlap sufficiently to be used for train and test. ``Various Paths'' represents different flight paths and altitudes  between multiple traversals. ``LiDAR Reloc'' represents whether it can be used for evaluating LiDAR relocalization. ``N/A'' represents this item does not apply. ``{$\checkmark{}\mkern-11mu\raisebox{0.8ex}{$\scriptstyle\smallsetminus$}$}'' represents that NTU VIRAL provides the relative poses derived from IMU. ``--'' represents that it hasn't a real LiDAR. ``$\ast$'' represents solid LiDAR lacks vertical resolution.}
  \vspace{1mm}
  \label{tab:example}
  \centering
  \resizebox{1\linewidth}{!}{
  \begin{tabular}{cc|ccc|ccccccc}
\hline
Dataset    & Year & LiDAR Type & \begin{tabular}[c]{@{}c@{}}LiDAR \\ Threads\end{tabular} & Scenes & \begin{tabular}[c]{@{}c@{}}Single-frame \\ PC Acquire\end{tabular} & \begin{tabular}[c]{@{}c@{}}Vertical \\ Resolution\end{tabular} & \begin{tabular}[c]{@{}c@{}}Points \\ /sec\end{tabular} & \begin{tabular}[c]{@{}c@{}}6-DoF \\ Pose\end{tabular}   &   \begin{tabular}[c]{@{}c@{}}Multiple\\ Traversals\end{tabular} & \begin{tabular}[c]{@{}c@{}}Various\\ Paths \end{tabular} & \begin{tabular}[c]{@{}c@{}}LiDAR \\ Reloc\end{tabular} \\  \hline
Mid-Air\cite{mid-air} & 2019 & no real LiDAR & N/A & Simulation  & \checkmark & \textcolor{red}{--} & \textcolor{red}{--} & \checkmark & \checkmark & \checkmark & \checkmark \\
TartanAir\cite{Tartanair} & 2020 & no real LiDAR & N/A & Simulation & \checkmark & \textcolor{red}{--} & \textcolor{red}{--} & \checkmark & \checkmark & \checkmark & \checkmark \\
University-1652\cite{University-1652} & 2020 & no real LiDAR & N/A & Google Earth & \checkmark & \textcolor{red}{--} & \textcolor{red}{--} & \textcolor{red}x & \textcolor{red}x & \textcolor{red}x & \textcolor{red}x \\
UrbanScene3D\cite{urbanscene3d} & 2022 & no real LiDAR & N/A & Simulation & \checkmark & \textcolor{red}{--} & \textcolor{red}{--} & \checkmark  & \checkmark  & \checkmark  & \checkmark  \\
SynDrone\cite{Syndrone} & 2023 & no real LiDAR & N/A & Simulation & \checkmark & \textcolor{red}{--} & \textcolor{red}{--} & \textcolor{red}x & \textcolor{red}x & \textcolor{red}x & \textcolor{red}x \\ 
UrbanBIS\cite{UrbanBIS} & 2023 & no real LiDAR & N/A & Simulation & \checkmark & \textcolor{red}{--} & \textcolor{red}{--} & \textcolor{red}x & \textcolor{red}x & \textcolor{red}x & \textcolor{red}x \\ \hline

CARPK\cite{CARPK} & 2017 & \textcolor{red}{Camera only} & N/A & Parking Lot & \textcolor{red}x & \textcolor{red}{--} & \textcolor{red}{--} & \textcolor{red}x & \textcolor{red}x & \textcolor{red}x & \textcolor{red}x \\
UAVDT\cite{UAVDT} & 2018 & \textcolor{red}{Camera only} & N/A & Urban Traffic & \textcolor{red}x &\textcolor{red}{--} & \textcolor{red}{--} & \textcolor{red}x & \textcolor{red}x & \textcolor{red}x & \textcolor{red}x \\
VisDrone\cite{VisDrone} & 2018 & \textcolor{red}{Camera only} & N/A & Urban \& Campus & \textcolor{red}x & \textcolor{red}{--} & \textcolor{red}{--} & \textcolor{red}x & \textcolor{red}x & \textcolor{red}x & \textcolor{red}x \\
UAVid\cite{UAVid} & 2020 & \textcolor{red}{Camera only} & N/A & Urban Roads & \textcolor{red}x & \textcolor{red}{--} & \textcolor{red}{--} & \textcolor{red}x & \textcolor{red}x & \textcolor{red}x & \textcolor{red}x \\
FloodNet\cite{Floodnet} & 2021 & \textcolor{red}{Camera only} & N/A & Urban & \textcolor{red}x & \textcolor{red}{--} & \textcolor{red}{--} & \textcolor{red}x & \textcolor{red}x & \textcolor{red}x & \textcolor{red}x \\
CrossLoc\cite{Crossloc} & 2022 & \textcolor{red}{Camera only} & N/A & Urban \& Rural \& Farm \& Nature & \textcolor{red}x & \textcolor{red}{--} & \textcolor{red}{--} & \checkmark & \checkmark & \textcolor{red}x & \checkmark \\
Drone Vehicle\cite{DroneVe} & 2022 & \textcolor{red}{Camera only} & N/A & Urban \& Road & \textcolor{red}x & \textcolor{red}{--} & \textcolor{red}{--} & \textcolor{red}x & \textcolor{red}x & \textcolor{red}x & \textcolor{red}x \\
SUES-200\cite{SUES-200} & 2023 & \textcolor{red}{Camera only} & N/A & Urban \& School \& Lake \& Park & \textcolor{red}x & \textcolor{red}{--} & \textcolor{red}{--} & \textcolor{red}x & \checkmark & \textcolor{red}x & \textcolor{red}x \\
UAV-VisLoc\cite{Uav-visloc} & 2024 & \textcolor{red}{Camera only} & N/A & Urban \& Town \& Farm \& River & \textcolor{red}x & \textcolor{red}{--} & \textcolor{red}{--} & \textcolor{red}x & \checkmark & \checkmark & \textcolor{red}x \\
HazyDet\cite{HazyDet} & 2024 & \textcolor{red}{Camera only} & N/A & Urban & \textcolor{red}x & \textcolor{red}{--} & \textcolor{red}{--} & \textcolor{red}x & \textcolor{red}x & \textcolor{red}x & \textcolor{red}x \\
UAVD4L\cite{Uavd4l} & 2024 & \textcolor{red}{Camera only} & N/A & Urban \& Rural & \textcolor{red}x & \textcolor{red}{--} & \textcolor{red}{--} & \checkmark & \textcolor{red}x & \textcolor{red}x & \checkmark \\ \hline

Hessigheim 3D\cite{Hessigheim} & 2021 & RIEGL VUX-1LR & Solid & Town & \checkmark & $\ast$ & 1.5M & \textcolor{red}x & \textcolor{red}x & \textcolor{red}x & \textcolor{red}x \\
NTU VIRAL\cite{ntu} & 2022 & 2 x Ouster-16 & 16 & Campus \textcolor{red}{(Small Scale)} & \checkmark & \textcolor{red}{2-3\degree} & 1.3M & \textcolor{red}{$\checkmark{}\mkern-11mu\raisebox{0.8ex}{$\scriptstyle\smallsetminus$}$} & \checkmark & \checkmark & \checkmark \\
UrbanScene3D\cite{urbanscene3d} & 2022 & Trimble-X7 & Solid & Urban & \textcolor{red}x  & $\ast$ & 500k & \textcolor{red}x  & \textcolor{red}x  & \textcolor{red}x  & \textcolor{red}x  \\
GraCo\cite{Graco} & 2023 & Velodyne-16 & 16 & Campus & \checkmark & \textcolor{red}{\textasciitilde2\degree} & 600k & \textcolor{red}x & \checkmark & \checkmark & \textcolor{red}x \\
GauU-Scene V2\cite{Gauu-scene} & 2024 & DJI-L1 & Solid & Urban \& Town \& Campus & \textcolor{red}x & $\ast$ & 480k & \checkmark & \checkmark & \checkmark & \textcolor{red}x \\
MUN-FRL\cite{mun} & 2024 & Velodyne-16 & 16 & Urban \& Road \& Airports \& Nature & \checkmark & \textcolor{red}{\textasciitilde2\degree} & 600k & \checkmark & \textcolor{red}x & \textcolor{red}x & \textcolor{red}x \\
FIReStereo\cite{Firestereo} & 2025 & Velodyne-16 & 16 &  Urban \& Forest & \checkmark & \textcolor{red}{\textasciitilde2\degree} & 600k & \textcolor{red}x & \checkmark & \checkmark & \textcolor{red}x \\
MARS-LVIG\cite{MARS-LVIG} & 2024 & DJI-L1 \& Livox-Avia & Solid & Airport \& Island \& Town \& Valley & \checkmark & $\ast$ & 480k \& 720k & \textcolor{red}x & \checkmark & \textcolor{red}x & \textcolor{red}x \\
UAVScenes\cite{UAVScenes} & 2025 & Livox-Avia & Solid & Airport \& Island \& Town \& Valley & \checkmark & $\ast$ & 720k & \checkmark & \checkmark & \textcolor{red}x & \checkmark \\ \hline

\textbf{UAVLoc (ours)}   & 2026  & 1 x Ous.-128 & 128 & Lab Park \& Campus \& Town \& Road & $\pmb{\checkmark}$ & \textbf{0.35}\degree & \textbf{5.2M} & $\pmb{\checkmark}$   & $\pmb{\checkmark}$ & $\pmb{\checkmark}$ & $\pmb{\checkmark}$ \\ \hline
\end{tabular}}
\label{table1}
\vspace{-1mm}
\end{table*}

\subsection{Dataset}

Most existing UAV datasets are limited to evaluate LiDAR relocalization in real-world, due to the lack of raw LiDAR scans, accurate 6-DoF poses, or multiple traversals. 
To address it, we construct UAVLoc, a LiDAR dataset for UAVs with irregular flight trajectories and extensive rotational variations in 4 scenes, as shown in Fig.~\ref{fig4}(a). In this section, we focus on the unique characteristics that distinguish UAVLoc from existing datasets; more details of UAVLoc, please refer to the Appendix.

\noindent\textbf{Comparison with UAV datasets.} Tab. \ref{table1} provides a comprehensive comparison between UAVLoc and other UAV datasets. Our goal is to collect a large-scale outdoor dataset with irregular flight paths, including variations in rotation, position, and altitude, to validate UAV relocalization algorithms. Overall, our dataset highlights the following key characteristics: 1) Extensive and large-scale scenes with high-quality point clouds. 2) Multiple traversals with \textbf{irregular trajectories} and \textbf{extensive variations in rotation and altitude} within every scene, providing realistic UAV flight conditions.

Compared to datasets~\cite{mid-air,Tartanair,University-1652,urbanscene3d,Syndrone,UrbanBIS} generated in simulators (such as AirSim), UAVLoc can reflect real-world operating conditions of UAVs.
Some UAV datasets~\cite{CARPK,UAVDT,VisDrone,UAVid,Floodnet,Crossloc,DroneVe,SUES-200,Uav-visloc,HazyDet,Uavd4l} are collected using only cameras. In contrast, LiDAR provides more reliable data in low-visibility conditions, such as at night.

Compared with other LiDAR UAV datasets, UAVLoc is the unique one that simultaneously include large-scale, irregular trajectories and various altitudes for UAV relocalization. Specifically, UrbanScene3D~\cite{urbanscene3d} is listed twice to distinguish simulation-based version and real-world version; however, it does not provide real-world LiDAR data in both version; GauU-Scene V2 \cite{Gauu-scene} uses the DJI-L1 LiDAR, which prevents the acquisition of per-frame point clouds. As point clouds cannot be acquired, they cannot be used for relocalization. The datasets~\cite{Hessigheim,Graco,Firestereo,MARS-LVIG} did not provide ground truth for 6-DoF poses; consequently, they also cannot be used to evaluate relocalization. In contrast, MUN-FRL~\cite{mun} offered 6-DoF poses, but it is equipped only with a short-range and low-resolution LiDAR on the high altitude, resulting in substantial missing ground observations. Moreover, it exhibits minimal overlap between multiple traversals, which makes it less suitable for evaluating relocalization. Instead, our dataset offers advantages in terms of vertical angular resolution and peak point rate, which directly affect the geometric observability and structural fidelity of point clouds for relocalization. NTU VIRAL~\cite{ntu} is a dataset designed for UAV SLAM, including LiDAR data and 6-DoF poses. However, it is collected in indoor and small-scale outdoor environments and only provides relative poses derived from IMU, limiting its application for large-scale scene understanding and relocalization. UAVScenes~\cite{UAVScenes} was the latest dataset and was suitable for evaluating large-scale outdoor UAV LiDAR relocalization. Unlike UAVScenes, which uses fixed altitudes and predefined flight paths, our dataset provides more complex flight data that more closely reflects the real-world operational conditions of UAVs. In summary, our dataset shows uniqueness and novelty in evaluating real-world UAV relocalization with challenging and realistic UAV flight data.

\begin{figure*}
  \centering
  \includegraphics[width=1\linewidth]{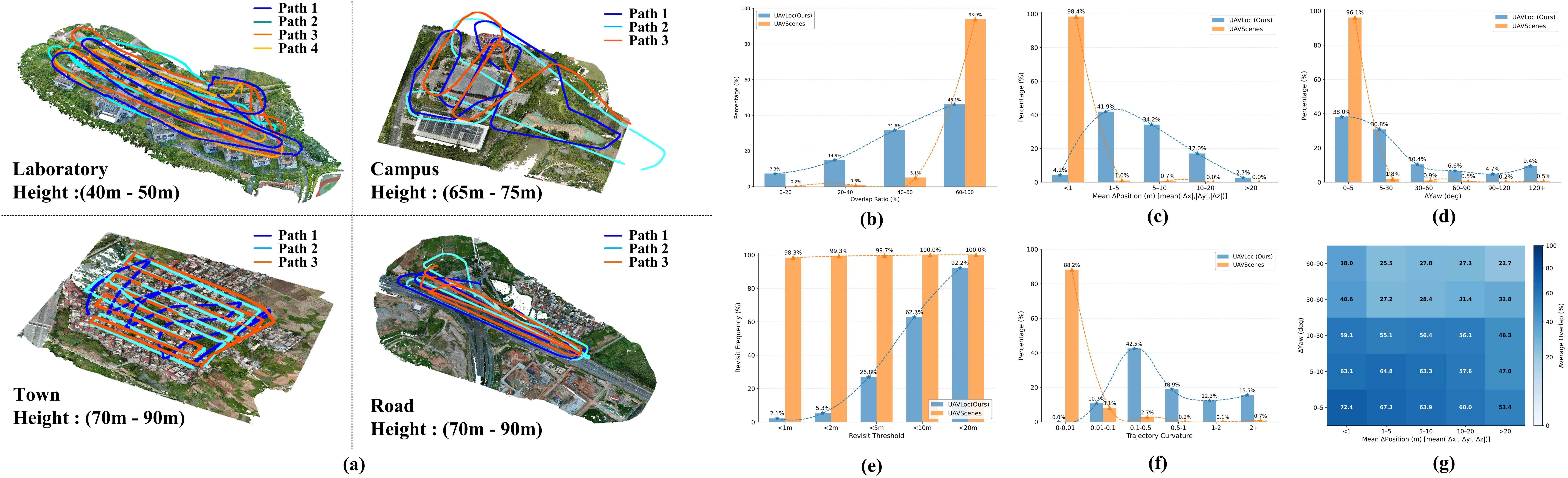}
  \caption{Fig.(a) shows irregular trajectories of our dataset. Fig.(b)–(f) show our dataset exhibits richer distributions of overlap, curvature, positional \& yaw offset, and revisit frequency compared to UAVScenes. Fig.(g) shows the impact of positional and yaw offset on overlap.}
\label{fig4}
\end{figure*}

\noindent\textbf{Quantitative comparison.} To provide further evidence of our superiority, we have quantitatively compared UAVLoc with the latest UAVScenes dataset in Fig.\ref{fig4}(b)-(f). As shown in Fig.\ref{fig4}(b)-(d), we present their distributions of the overlap ratio, positional offset($\Delta$Position), and yaw offset($\Delta$yaw). Specifically, we calculate the overlap ratio by projecting two point clouds (one in test set and its nearest based on ground-truth in train set) into 1m voxel grids and measuring their set intersection, i.e., $\mathrm{Overlap}(P_q, P_r) = \frac{| \mathcal{V}(P_q) \cap \mathcal{V}(P_r) |}{| \mathcal{V}(P_q) |}$; and calculate the $\Delta$Position and $\Delta$yaw also by computing the mean(x, y, z) and yaw offset for these nearest paired point clouds. Fig.\ref{fig4}(b)-(d) show that UAVLoc exhibits richer and more challenging distributions of overlap ratio, $\Delta$Position and $\Delta$yaw, compared to UAVScenes. This is due to the irregular flight paths in UAVLoc. Moreover, as shown in Fig.\ref{fig4}(g), there is a marked correlation between the increase in $\Delta$Position and $\Delta$yaw and the decrease in the overlap ratio. Similarly, the irregular paths led to a decrease in our revisit rate (i.e. the proportion of frames in test set successfully matched to frame in train set within thresholds). As shown in Fig.\ref{fig4}(e), almost all frames in UAVScenes have a revisited point within a 1-metre radius, showing that the flight path is almost entirely fixed; however, this does not reflect the complex flight conditions of UAVs in real world. Finally, we calculate curvature distributions for two datasets by estimating discrete trajectory curvature at each point using three consecutive positions, i.e., $\kappa_i = \frac{2 \left| (\mathbf{p}_i - \mathbf{p}_{i-1}) \times (\mathbf{p}_{i+1} - \mathbf{p}_{i}) \right|}{\left| \mathbf{p}_{i} - \mathbf{p}_{i-1} \right| \left| \mathbf{p}_{i+1} - \mathbf{p}_{i} \right| \left| \mathbf{p}_{i+1} - \mathbf{p}_{i-1} \right|}$, where $\kappa_i$ denotes the curvature and $\mathbf{p}_{i}$ denotes the 3D position at timestep $i$. As shown in Fig.~\ref{fig4}(f), Our dataset exhibits a richer distribution of curvature, which demonstrates more frequent rotational changes. Overall, quantitative comparisons demonstrate the challenging and diverse attributes of our dataset, making it valuable for evaluating real-world UAV deployment scenarios.

More details about UAVLoc, including Sensors Setup, Ground Truth, etc., please see the appendix.
\section{Experiment}

\begin{table*}
  \caption{\textbf{Results on UAVScenes dataset.} We report the Recall@1 under a threshold of 10m, the success rate (SR) under a threshold of 10m/10\degree and the translation and rotation errors [m/\degree] while success (RTE/RRE), as well as mean and median localization errors [m/\degree] on UAVScenes. `--' signifies the failure of the algorithm’s execution, `N/A' signifies HOTForLoc doesn’t explicitly estimate pose and cannot calculate these metrics.}
  \label{tab:example}
  \centering
  \resizebox{1\linewidth}{!}{
\begin{tabular}{lccccc||ccccc}
\hline
\multirow{3}{*}{Method} & \multicolumn{5}{c||}{Seq: HKairport\_3}                                                                                                                                                                       & \multicolumn{5}{c}{Seq: HKisland\_3}                                                                                                                                                                         \\ \cline{2-11} 
                        & \multicolumn{1}{c}{Thr.=5m} 
& \multicolumn{2}{c}{Thr.=5m,5\degree} 
& \multirow{2}{*}{Mean Error} 
& \multirow{2}{*}{Median Error}
& \multicolumn{1}{c}{Thr.=5m} 
& \multicolumn{2}{c}{Thr.=5m,5\degree} 
& \multirow{2}{*}{Mean Error} 
& \multirow{2}{*}{Median Error} \\ \cline{2-4} \cline{7-9}
                        & Recall@1        & SR        & RTE,RRE       &                                                                                    &                                                         & Recall@1        & SR        & RTE,RRE       &                                                                                    &                                                         \\ \hline

Kiss-ICP (RAL'23)
& 36.88\% & 25.60\% & \colorbox{second}{1.17m},1.88\degree & 29.21m,22.26\degree & 11.60m,7.39\degree
& 45.80\% & 42.96\% & 1.87m,1.80\degree & 14.76m,15.98\degree & 11.07m,6.43\degree \\
PIN-SLAM (TRO'24) 
& 30.16\% & 30.16\% & \colorbox{third}{1.55m},1.96\degree & 38.45m,27.10\degree & 15.70m,12.92\degree
& 32.43\% & 31.97\% & \colorbox{second}{1.47m},\colorbox{third}{1.72\degree} & 27.40m,19.24\degree & 12.89m,10.48\degree \\
HOTFLoc (CVPR'25) & 85.19\% & N/A & N/A & N/A & N/A & \cellcolor{third}90.29\% & N/A & N/A & N/A & N/A \\
BEVPlace++ (TRO'25)   & 58.79\% & 52.34\% & 4.16m,3.99\degree & 6.90m,5.37\degree & 5.17m,3.54\degree & 74.23\% & 69.17\% & 2.33m,2.46\degree & 5.19m,3.39\degree & 2.89m,2.90\degree \\
Egonn (RAL'21)  & 75.64\% & 64.39\% & 3.29m,3.01\degree & 4.53m,4.19\degree & 3.51m,2.92\degree & 89.03\% & \cellcolor{third}83.27\% & 1.98m,2.25\degree & \colorbox{third}{3.78m},4.24\degree & \colorbox{third}{1.84m},2.05\degree \\
SGLoc (CVPR'23)  &     \cellcolor{third}90.84\%   &     84.19\%          &   2.00m,\colorbox{third}{1.68\degree}  & \colorbox{third}{2.68m},2.57\degree     &        \colorbox{third}{2.05m},1.67\degree    & 33.23\%   &     30.69\%  &   3.48m,1.96\degree  & 6.83m,\colorbox{third}{3.03\degree}     &    3.44m,\colorbox{third}{1.79\degree}  \\
SGLoc+RA (CVPR'23)                &    88.14\% &  \cellcolor{third}85.24\%  &  2.11m,\colorbox{second}{1.67\degree}  & 3.69m,\colorbox{third}{2.30\degree}      &      3.38m,\colorbox{third}{1.62\degree}   &     33.16\%    &   29.19\%    &       3.59m,\colorbox{third}{1.72\degree} & 6.85m,3.16\degree  &   3.48m,1.95\degree     \\
LightLoc (CVPR'25)                &  77.70\% & 65.96\% & 2.55m,2.21\degree & 3.88m,4.13\degree  & 3.17m,2.79\degree       & 60.55\% & 51.25\% & 3.20m,2.16\degree & 6.95m,5.80\degree & 4.27m,3.14\degree  \\
RALoc (ICCV'25)                   &     \cellcolor{second}93.45\%   &     \cellcolor{second}88.09\%    &   1.68m,2.03\degree  & \cellcolor{second}{1.97m,2.24\degree} &    \cellcolor{second}{1.42m,1.56\degree}         &    \cellcolor{second}97.36\%    &           \cellcolor{second}90.54\%   &     \colorbox{third}{1.57m},\colorbox{second}{1.65\degree}  & \cellcolor{second}{1.81m,2.24\degree} & \cellcolor{second}{1.36m,1.46\degree}  \\
\textbf{\method{} (Ours)} & \cellcolor{best}\textbf{99.50\%} & \cellcolor{best}\textbf{96.26\%} & \cellcolor{best}\textbf{1.01m,1.23\degree} & \cellcolor{best}\textbf{1.06m,1.46\degree} & \cellcolor{best}\textbf{0.85m,0.95\degree}
& \cellcolor{best}\textbf{99.53\%} & \cellcolor{best}\textbf{92.91\%} & \cellcolor{best}\textbf{0.96m,1.31\degree} & \cellcolor{best}\textbf{1.03m,1.85\degree} & \cellcolor{best}\textbf{0.79m,1.00\degree} \\ \hline \hline

\multirow{3}{*}{Method}  & \multicolumn{5}{c||}{Seq: AMtown\_3}                                                                                                                                                                          & \multicolumn{5}{c}{Seq: AMvalley\_3}                                                                                                                                                                         \\ \cline{2-11} 
                       & \multicolumn{1}{c}{Thr.=5m} 
& \multicolumn{2}{c}{Thr.=5m,5\degree} 
& \multirow{2}{*}{Mean Error} 
& \multirow{2}{*}{Median Error}
& \multicolumn{1}{c}{Thr.=5m} 
& \multicolumn{2}{c}{Thr.=5m,5\degree} 
& \multirow{2}{*}{Mean Error} 
& \multirow{2}{*}{Median Error} \\ \cline{2-4} \cline{7-9}
                        & Recall@1        & SR        & RTE,RRE       &                                                                                    &                                                         & Recall@1        & SR        & RTE,RRE       &                                                                                    &                                                         \\ \cline{2-11} 

Kiss-ICP (RAL'23)
& 27.22\% & 26.54\% & \cellcolor{second}{1.50m,1.27\degree} & 24.29m,6.15\degree & 17.98m,4.22\degree
& -- & -- & -- & -- & -- \\
PIN-SLAM (TRO'24) 
& 20.13\% & 19.97\% & 2.58m,1.46\degree & 26.11m,5.48\degree & 19.14m,4.27\degree
& -- & -- & -- & -- & -- \\
HOTFLoc (CVPR'25) & \cellcolor{second}89.76\% & N/A & N/A & N/A & N/A & 76.84\% & N/A & N/A & N/A & N/A \\
BEVPlace++ (TRO'25) & 83.17\% & 75.43\% & 2.15m,2.23\degree & 10.76m,6.77\degree  & \colorbox{third}{2.24m},2.55\degree & 50.41\% & 45.10\% & 2.80m,2.93\degree & 15.71m,7.92\degree & 4.11m,3.12\degree \\
Egonn (RAL'21) & \cellcolor{third}85.93\% & \cellcolor{third}80.11\% & \colorbox{third}{1.91m},1.98\degree & \cellcolor{second}{4.31m,4.20\degree} & \cellcolor{second}{2.17m,2.12\degree} & \cellcolor{second}82.98\% & \cellcolor{third}76.14\% & \colorbox{second}{2.10m},\colorbox{third}{2.08\degree} & \cellcolor{second}{8.33m,4.57\degree} &  \colorbox{second}{3.01m},\colorbox{third}{2.09\degree} \\
SGLoc (CVPR'23) &  65.94\%   &    52.58\%  &    2.97m,2.16\degree   & 12.16m,7.56\degree       &       5.77m,2.66\degree                                                  &    67.64\%    &       63.81\%   &   2.59m,\colorbox{second}{1.72\degree}  & 12.01m,\colorbox{third}{5.00\degree}         &             3.22m,\colorbox{second}{2.00\degree}                                            \\
SGLoc+RA (CVPR'23)  &  82.19\% &  \cellcolor{second}80.88\% &  2.07m,\colorbox{third}{1.86\degree} & \cellcolor{third}{9.43m,5.24\degree}  & 2.82m,\colorbox{third}{2.24\degree} &    \cellcolor{third}82.66\%   & \cellcolor{second}79.23\%  &   \colorbox{third}{2.11m},2.26\degree & \colorbox{third}{9.60m},7.75\degree   &  \colorbox{third}{3.19m},3.01\degree \\
LightLoc (CVPR'25)  & 81.26\% & 68.19\% & 2.02m,2.06\degree & 11.48m,7.65\degree & 2.33m,2.59\degree & 53.86\% & 47.51\% & 2.85m,2.17\degree & 18.31m,8.39\degree  &  4.64m,3.44\degree  \\
RALoc (ICCV'25)  &    75.17\%    &      63.78\%   &    2.47m,2.05\degree    & 11.93m,7.11\degree         &               3.02m,2.65\degree           &    35.15\%    &         20.31\%     &   2.86m,2.80\degree      & 13.32m,5.37\degree      &       7.09m,4.89\degree        \\
\textbf{\method{} (Ours)} & \cellcolor{best}\textbf{98.32\%} & \cellcolor{best}\textbf{91.57\%} & \cellcolor{best}\textbf{0.93m,1.19\degree} & \cellcolor{best}\textbf{1.34m,2.68\degree} & \cellcolor{best}\textbf{0.84m,0.90\degree}
& \cellcolor{best}\textbf{91.70\%} & \cellcolor{best}\textbf{88.82\%} & \cellcolor{best}\textbf{1.43m,1.29\degree} & \cellcolor{best}\textbf{2.14m,2.15\degree} & \cellcolor{best}\textbf{1.27m,1.10\degree} \\ \hline
\end{tabular}}
\label{table2}
\end{table*}

\subsection{Setup}

\noindent\textbf{Datasets and metrics.} We report comprehensive evaluation metrics on UAVScenes~\cite{UAVScenes} and our UAVLoc, including retrieval performance measured by Recall@K, success rate under predefined error thresholds, translation and rotation errors upon successful retrieval (RTE/RRE), as well as the median and mean position and orientation errors. 
For UAVScenes, we use sequence 01 and 02 of each scene for training and 03 for testing. For UAVLoc, we also use sequence 01 and 02 of each scene for training, and other sequences for testing. All method are implemented by Pytorch on a server equipped with an Intel Xeon(R) Gold 6342 CPU, and one NVIDIA RTX 5090 GPUs.

\noindent\textbf{Baseline.}
As this work represents the first study on LiDAR-based map-free UAV relocalization, we select several representative baseline methods to compare, which are among the most advanced approaches in vehicle-LiDAR relocalization. Specifically, for SCR-based methods, we consider SGLoc \cite{SGLoc}, LightLoc \cite{Lightloc}, and RALoc \cite{RALoc} as baseline methods. For SGLoc, we additionally report results of SGLoc with rotation augmentation, denoted as SGLoc+RA, what other methods applied. For conventional-based methods, we consider HOTFormerLoc~\cite{HOTFormerLoc}(Marked as HOTForLoc), BEVPlace++~\cite{bevplace++} and Egonn~\cite{egonn}, where HOTFormerLoc focuses on retrieval and doesn't explicitly estimate poses; therefore, we only report its Recall@1. For odometry and SLAM-based methods. although our primary focus is on global relocalization rather than trajectory estimation, we include results from Kiss-ICP~\cite{kiss_icp} and PinSLAM~\cite{pinslam} as additional references. These methods provide insight into drift accumulation, proving the necessity of global relocalization in our setting. On both UAVScenes and UAVLoc, as some trajectories and their raw point clouds provided minor gaps and missing data, we have run these two methods on them in segments to ensure a fair comparison. All baselines are carefully tuned following official implementations and evaluated under same settings.

\noindent\textbf{4.2 Results}

\begin{table*}
\caption{\textbf{Results on UAVLoc datasets.} We also report the comprehensive metrics on UAVLoc.}
\centering
\resizebox{1\linewidth}{!}{
\begin{tabular}{cccccc||ccccc}
\hline
\multirow{3}{*}{Method} 
& \multicolumn{5}{c||}{Seq: Laboratory\_3} 
& \multicolumn{5}{c}{Seq: Laboratory\_4} \\ \cline{2-11}

& \multicolumn{1}{c}{Thr.=5m\degree} 
& \multicolumn{2}{c}{Thr.=5m,5\degree} 
& \multirow{2}{*}{Mean Error} 
& \multirow{2}{*}{Median Error}
& \multicolumn{1}{c}{Thr.=5m\degree} 
& \multicolumn{2}{c}{Thr.=5m,5\degree} 
& \multirow{2}{*}{Mean Error} 
& \multirow{2}{*}{Median Error} \\ \cline{2-4} \cline{7-9}

& Recall@1 & SR & RTE,RRE 
& & 
& Recall@1 & SR & RTE,RRE 
& & \\ \hline

Kiss-ICP (RAL'23)
& 19.52\% & 18.82\% & \cellcolor{third}{1.99m,1.26\degree} & 26.48m,\colorbox{second}{6.66\degree} & 24.20m,6.42\degree 
& \cellcolor{second}64.21\% & \cellcolor{second}64.21\% & 2.55m,\colorbox{second}{1.26\degree} & \cellcolor{second}{5.27m,1.97\degree} & \cellcolor{second}{3.35m,1.63\degree} \\

PIN-SLAM (TRO'24)
& 25.40\% & 24.72\% & \cellcolor{second}{1.73m,1.02\degree} & 21.65m,\colorbox{second}{6.87\degree} & 10.22m,\colorbox{third}{3.25\degree} 
& 19.90\% & 19.90\% & 4.23m,1.97\degree & 13.73m,\colorbox{third}{8.09\degree} & 10.75m,4.31\degree \\

HOTFLoc (CVPR'25)
& 45.68\% & N/A & N/A & N/A & N/A 
& \cellcolor{third}58.42\% & N/A & N/A & N/A & N/A  \\

BEVPlace++ (TRO'25)
& 49.76\% & 43.20\% & 2.88m,2.65\degree & 14.32m,12.11\degree & \colorbox{third}{6.05m},4.98\degree
& 40.21\% & 36.18\% & 3.17m,2.54\degree & 24.85m,14.21\degree & 6.91m,3.03\degree \\

Egonn (RAL'21) 
& 40.01\% & 30.18\% & 3.59m,3.21\degree & 19.08m,18.24\degree & 6.98m,5.44\degree
& 37.91\% & 25.10\% & 3.67m,3.22\degree & 19.51m,18.36\degree & 7.99m,4.31\degree  \\

SGLoc (CVPR'23)
& 37.13\% & 32.47\% & 2.80m,2.06\degree & 20.08m,15.16\degree & 7.53m,4.33\degree
& 46.17\% & 44.92\% & 2.87m,1.90\degree & 23.46m,12.99\degree & 5.61m,\colorbox{third}{2.53\degree} \\

SGLoc+RA (CVPR'23)
& \cellcolor{third}51.10\% & \cellcolor{third}47.59\% & 2.99m,2.77\degree & \colorbox{third}{13.66m},11.54\degree & 6.20m,5.32\degree
& 41.61\% & 33.33\% & 3.00m,2.72\degree & 11.46m,12.30\degree & 5.90m,3.96\degree \\

LightLoc (CVPR'25)
& 28.55\% & 25.09\% & 2.66m,2.02\degree & 56.07m,33.37\degree & 15.97m,10.01\degree
& 35.40\% & 32.98\% & \colorbox{third}{2.43m},1.87\degree & 37.63m,20.60\degree & 8.82m,4.24\degree \\

RALoc (ICCV'25) 
& \cellcolor{second}65.18\% & \cellcolor{second}59.90\% & 2.07m,1.74\degree & \colorbox{second}{9.68m},7.33\degree & \cellcolor{second}{3.38m,2.71\degree}
& 54.78\% & \cellcolor{third}51.48\% & \colorbox{second}{2.14m},\colorbox{third}{1.66\degree} & \colorbox{third}{10.78m},9.67\degree & \colorbox{third}{4.58m},2.93\degree \\

\textbf{\method{} (Ours)} 
& \cellcolor{best}\textbf{97.54\%} & \cellcolor{best}\textbf{95.98\%} & \cellcolor{best}\textbf{1.12m,0.99\degree} & \cellcolor{best}\textbf{1.34m,1.13\degree} & \cellcolor{best}\textbf{0.90m,0.83\degree}
& \cellcolor{best}\textbf{89.79\%} & \cellcolor{best}\textbf{87.34\%} & \cellcolor{best}\textbf{1.13m,1.08\degree} & \cellcolor{best}\textbf{2.18m,1.89\degree} & \cellcolor{best}\textbf{0.96m,0.87\degree} \\

\hline \hline

\multirow{3}{*}{Method} 
& \multicolumn{5}{c||}{Seq: Campus\_3} 
& \multicolumn{5}{c}{Seq: Town\_3} \\ \cline{2-11}

& \multicolumn{1}{c}{Thr.=5m\degree} 
& \multicolumn{2}{c}{Thr.=5m,5\degree} 
& \multirow{2}{*}{Mean Error} 
& \multirow{2}{*}{Median Error}
& \multicolumn{1}{c}{Thr.=5m\degree} 
& \multicolumn{2}{c}{Thr.=5m,5\degree} 
& \multirow{2}{*}{Mean Error} 
& \multirow{2}{*}{Median Error} \\ \cline{2-4} \cline{7-9}

& Recall@1 & SR & RTE,RRE 
& & 
& Recall@1 & SR & RTE,RRE 
& & \\ \hline

Kiss-ICP (RAL'23)
& 20.17\% & 19.96\% & 2.89m,\colorbox{third}{1.96\degree} & 89.79m,\colorbox{third}{10.25\degree} & 10.64m,4.47\degree
& 31.62\% & 30.98\% & \cellcolor{third}{2.71m,1.70\degree} & 76.31m,9.91\degree & 8.93m,3.05\degree \\

PIN-SLAM (TRO'24)
& \cellcolor{third}55.49\% & \cellcolor{third}55.20\% & \cellcolor{second}{1.61m,1.86\degree} & \colorbox{third}{17.13m},\colorbox{second}{9.96\degree} & \cellcolor{second}{2.56m,1.91\degree}
& 18.77\% & 18.49\% & 3.65m,2.14\degree & 84.55m,12.62\degree & 10.79m,3.92\degree \\

HOTFLoc (CVPR'25)
& 17.28\% & N/A & N/A & N/A & N/A 
& 40.33\% & N/A & N/A & N/A & N/A  \\

BEVPlace++ (TRO'25)
& 11.29\% & 7.62\% & 4.17m,3.01\degree & 51.83m,35.77\degree & 30.64m,34.15\degree
& 17.25\% & 10.12\% & 4.00m,2.76\degree & 43.54m,11.50\degree & 20.23m,25.14\degree \\

Egonn (RAL'21) 
& 17.85\% & 13.88\% & 3.08m,2.81\degree & 44.84m,30.12\degree & 25.18m,29.30\degree
& 35.46\% & 27.73\% & 2.97m,1.99\degree & 11.78m,8.75\degree & 4.88m,3.70\degree \\

SGLoc (CVPR'23)
& 17.10\% & 12.86\% & 3.11m,2.51\degree & 37.63m,28.27\degree & 15.51m,11.20\degree
& 25.62\% & 17.59\% & 3.87m,2.43\degree & 15.53m,9.99\degree & 9.43m,5.24\degree \\

SGLoc+RA (CVPR'23)
& 24.43\% & 15.16\% & 2.98m,2.41\degree & 30.23m,28.95\degree & 14.72m,12.00\degree
& 29.64\% & 24.77\% & 3.11m,2.53\degree & 15.52m,9.98\degree & 7.84m,4.12\degree \\

LightLoc (CVPR'25)
& 30.28\% & 25.54\% & 4.05m,3.88\degree & 42.41m,12.08\degree & 20.78m,10.66\degree
& \cellcolor{second}97.45\% & \cellcolor{second}95.09\% & \cellcolor{second}{1.97m,1.37\degree} & \cellcolor{second}{2.48m,1.62\degree} & \cellcolor{second}{1.82m,1.20\degree} \\

RALoc (ICCV'25) 
& \cellcolor{second}65.05\% & \cellcolor{second}55.37\% & \colorbox{third}{2.15m},2.12\degree & \colorbox{second}{12.83m},11.87\degree &  \cellcolor{third}{3.83m,3.41\degree} 
& \cellcolor{third}64.16\% & \cellcolor{third}58.98\% & 2.86m,1.82\degree & \cellcolor{third}{9.52m,3.00\degree} & \cellcolor{third}{2.93m,2.49\degree} \\

\textbf{\method{} (Ours)} 
& \cellcolor{best}\textbf{95.04\%} & \cellcolor{best}\textbf{90.48\%} & \cellcolor{best}\textbf{1.30m,1.44\degree} & \cellcolor{best}\textbf{2.48m,2.75\degree} & \cellcolor{best}\textbf{1.12m,1.20\degree}
& \cellcolor{best}\textbf{97.47\%} & \cellcolor{best}\textbf{96.01\%} & \cellcolor{best}\textbf{1.27m,1.19\degree} & \cellcolor{best}\textbf{1.52m,1.48\degree} & \cellcolor{best}\textbf{1.09m,1.02\degree} \\

\hline \hline

\multirow{3}{*}{Method} 
& \multicolumn{5}{c||}{Seq: Road\_3} 
& \multicolumn{5}{c}{Average} \\ \cline{2-11}

& \multicolumn{1}{c}{Thr.=5m\degree} 
& \multicolumn{2}{c}{Thr.=5m,5\degree} 
& \multirow{2}{*}{Mean Error} 
& \multirow{2}{*}{Median Error}
& \multicolumn{1}{c}{Thr.=5m\degree} 
& \multicolumn{2}{c}{Thr.=5m,5\degree} 
& \multirow{2}{*}{Mean Error} 
& \multirow{2}{*}{Median Error} \\ \cline{2-4} \cline{7-9}

& Recall@1 & SR & RTE,RRE 
& & 
& Recall@1 & SR & RTE,RRE 
& & \\ \hline

Kiss-ICP (RAL'23)
& -- & -- & -- & -- & --
& 33.88\% & 33.49\% & \cellcolor{second}{2.54m,1.55\degree} & 49.46m,\colorbox{second}{7.20\degree} & 11.78m,3.89\degree \\

PIN-SLAM (TRO'24)
& -- & -- & -- & -- & --
& 29.89\% & 29.58\% & 2.81m,\colorbox{third}{1.75\degree} & 34.27m,\colorbox{third}{9.39\degree} & \colorbox{third}{8.58m},\colorbox{second}{3.35\degree} \\

HOTFLoc (CVPR'25)
& \cellcolor{second}61.47\% & N/A & N/A & N/A & N/A 
& 44.64\% & N/A & N/A & N/A & N/A  \\

BEVPlace++ (TRO'25)
& 16.38\% & 9.92\% & 4.09m,2.69\degree & 45.68m,11.92\degree & 24.15m,26.13\degree
& 26.98\% & 21.41\% & 3.66m,2.73\degree & 36.04m,17.10\degree & 17.60m,18.69\degree \\

Egonn (RAL'21)
& 56.87\% & \cellcolor{third}50.36\% & \colorbox{second}{2.43m},\colorbox{third}{1.89\degree} & \cellcolor{second}13.33m,10.57\degree & \cellcolor{second}{3.67m,3.29\degree}
& 37.24\% & 28.25\%  & 3.15m,2.62\degree  & 21.71m,17.21\degree & 9.74m,9.21\degree \\

SGLoc (CVPR'23)
& 30.45\% & 28.87\%  & 3.97m,4.01\degree  & 27.17m,\colorbox{third}{11.10\degree} & 11.25m,6.13\degree
& 31.29\% & 27.34\%  & 3.32m,2.58\degree  & 24.77m,15.50\degree & 9.87m,5.89\degree \\

SGLoc+RA (CVPR'23)
& 35.57\% & 29.16\% & 3.27m,3.67\degree & 24.20m,22.70\degree & 9.95m,9.55\degree
& 34.47\% & 26.00\%  & 3.07m,2.82\degree  & \colorbox{third}{19.01m},17.09\degree & 8.92m,6.99\degree \\

LightLoc (CVPR'25)
& \cellcolor{third}59.52\% & \cellcolor{second}56.63\% & \colorbox{third}{2.72m},\colorbox{second}{1.74\degree} & \colorbox{third}{16.22m},15.79\degree & \cellcolor{third}4.07m,4.64\degree
& \cellcolor{third}50.24\% & \cellcolor{third}47.07\%  & 2.77m,2.18\degree  & 30.96m,16.92\degree & 10.29m,6.15\degree \\

RALoc (ICCV'25) 
& 32.08\% & 18.77\%  & 3.86m,3.61\degree & 26.03m,15.14\degree & 9.45m,6.02\degree
& \cellcolor{second}56.25\% & \cellcolor{second}48.90\%  & \colorbox{third}{2.62m},2.19\degree  & \colorbox{second}{13.77m},9.40\degree & \colorbox{second}{4.83m},\colorbox{third}{3.51\degree} \\

\textbf{\method{} (Ours)} 
& \cellcolor{best}\textbf{91.25\%} & \cellcolor{best}\textbf{89.97\%} & \cellcolor{best}\textbf{1.61m,1.20\degree} & \cellcolor{best}\textbf{3.80m,2.92\degree} & \cellcolor{best}\textbf{1.51m,1.04\degree}
& \cellcolor{best}\textbf{94.02\%} & \cellcolor{best}\textbf{91.96\%} & \cellcolor{best}\textbf{1.29m,1.18\degree} & \cellcolor{best}\textbf{2.26m,2.03\degree} & \cellcolor{best}\textbf{1.12m,0.99\degree} \\

\hline
\end{tabular}}
\label{table3}
\vspace{-4mm}
\end{table*}

\begin{figure*}
  \centering
  \includegraphics[width=1\linewidth]{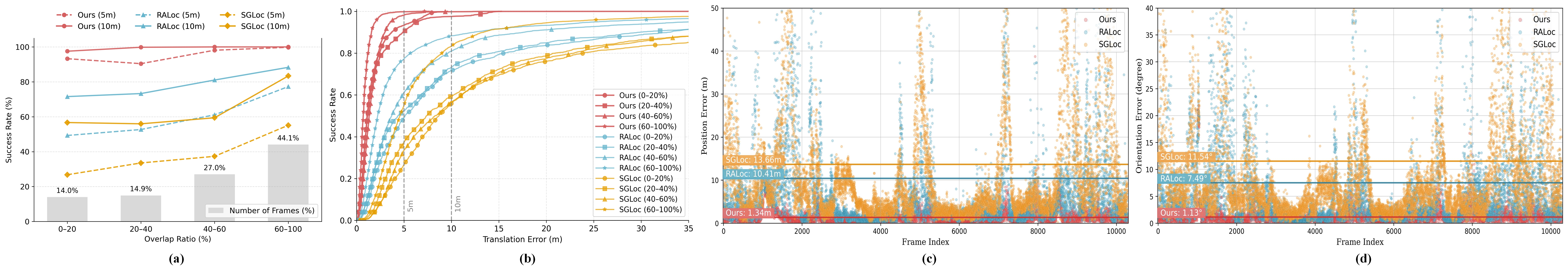}
  \caption{Fig(a) shows success rates (Thr. = 5m\&10m) of three methods across different overlap rate. Fig(b) shows cumulative distribution function (CDF) of three methods under different overlap rate. Fig(c)(d) show distributions of position and rotation errors for three methods, respectively.}
\label{fig6_2}
\end{figure*}

\begin{figure*}
  \centering
  \centering
  \includegraphics[width=1\linewidth]{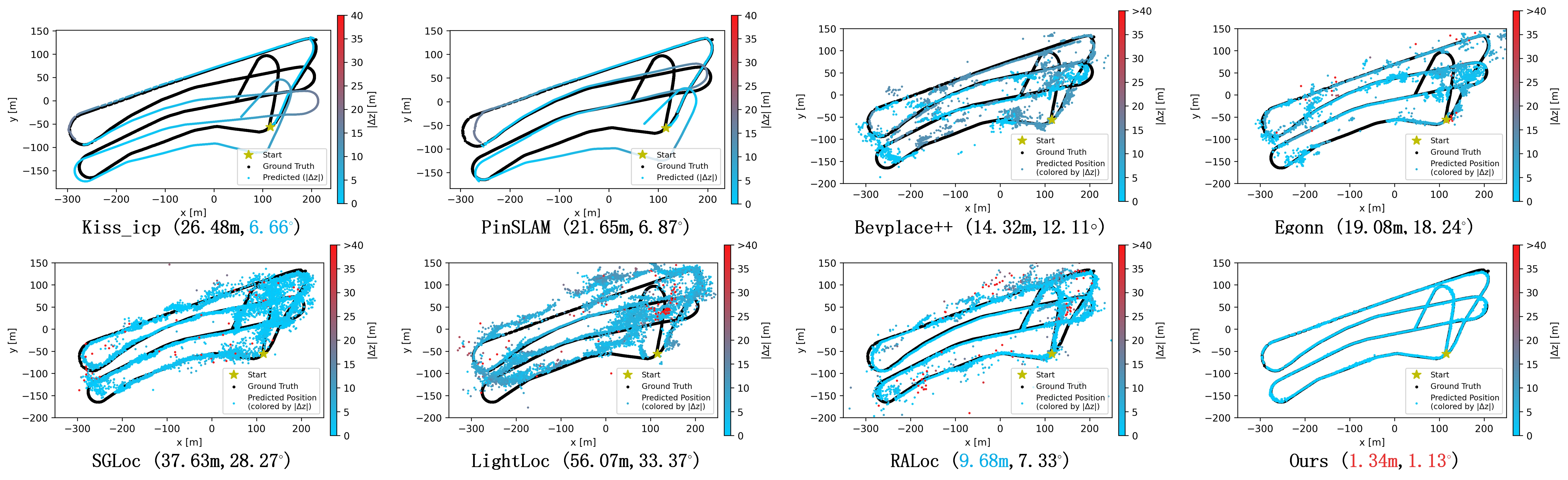}
  \caption{Localization results on Laboratory\_3 (UAVLoc). We highlight \textcolor{red}{best} and \textcolor{blue}{second-best} results.}
\label{fig6}
\end{figure*}

\begin{table}
\centering
\caption{\textbf{Results of Ablation Study,} where w/o means without, and R/ means Replacing.}
\resizebox{1\linewidth}{!}{
\begin{tabular}{l|ccc}
\hline
Modules & AMvalley & Lab\_3 & Lab\_4 \\ \hline
(1) R/ constant features & 5.31m,4.25$^\circ$ & 26.06m,19.93$^\circ$ & 16.28m,12.51$^\circ$ \\
(2) w/o Softmax & 212.13m,105.11$^\circ$ & 156.21m,117.44$^\circ$ & 158.32m,108.09$^\circ$ \\
(3) R/ LoSWAtt & 98.75m,65.32$^\circ$ & 138.19m,64.28$^\circ$ & 164.21m,71.25$^\circ$ \\
(4) w/o Position Encoding & 196.58m,108.72$^\circ$ & 165.92m,116.57$^\circ$ & 168.76m,123.99$^\circ$ \\ \hline
(5) \textbf{Full Method} & \textbf{2.14m,2.15$^\circ$} & \textbf{1.34m,1.13$^\circ$} & \textbf{2.18m,1.89$^\circ$} \\ \hline
\end{tabular}}
\label{table4}
\end{table}

\noindent\textbf{Result on UAVScenes:}
Tab. \ref{table2} reports the comparison between \method{} and existing methods on the UAVScenes dataset. \method{} achieves \textbf{state-of-the-art} performance, yielding average errors by \textbf{[1.39m,2.04\degree]} across four scenes, which substantially surpass those of the comparison methods. Vehicle relocalization methods exhibit significantly degraded accuracy in UAV scenarios. Odometry and SLAM-based methods suffer from high errors when handling large scenes and long trajectories due to cumulative drift. Moreover, due to the high speed of the UAVs, these methods failed on the AMvalley\_3; this is consistent with it presented in original dataset paper~\cite{MARS-LVIG} and is not attributable to incorrect parameter tuning. This further demonstrates the necessity of global relocalization method. Although applying rotation augmentation slightly improved SGLoc performance on the AMtown and AMvalley tracks, it caused a noticeable decline in accuracy on HKairport (2.68 m to 3.69 m). This suggests that data augmentation alone is insufficient to handle the complex and diverse UAV poses, whereas our rotation-invariant feature encoding provides a more robust and generalizable solution.

\noindent\textbf{Result on UAVLoc:} The UAVScenes dataset is insufficient to fully evaluate UAV relocalization methods. Therefore, we collected a more realistic UAV dataset, UAVLoc, for validation. As shown in Tab. \ref{table3}, except for \method{}, all methods suffer from severe performance degradation with many methods exhibiting mean errors above 20–30 meters. The results indicate that irregular trajectories, along with large pose variations, pose significant challenges for all methods. Notably, odometry and SLAM-based methods also fail on Road\_03; this is due to the featureless of road scene and manual flight causing LiDAR data degradation, a similar phenomenon also observed in MARS-LVIG~\cite{MARS-LVIG}. The results show \method{} also achieves \textbf{state-of-the-art} performance, with average errors by \textbf{[2.26m,2.03\degree]} and median errors by \textbf{[1.12m,0.99\degree]}, which significantly outperforms other methods. This improvement is mainly attributed to the robustness of \method{} under low-overlap conditions (see Fig. \ref{fig6_2}(a)) caused by irregular trajectories in UAV relocalization. 

\noindent\textbf{Comparison and analysis of distributions:} As shown in Fig.\ref{fig6_2}, we selected SGLoc+RA, RALoc and ours that performed well on Laboratory\_3 and provided a more detailed visualisation of their results. Fig.\ref{fig6_2}(a) shows the effect of overlap on relocalization performance, where lower overlap rates(0-40\%) increase the difficulty of localization and lead to larger errors. However, \method{} remains markedly more robust than other methods in low-overlap areas. Fig. \ref{fig4}(g) proves that yaw and altitude changes are strongly correlated with a decrease in overlap, thereby verifying the robustness of \method{} against UAV rotation and translation. More failure case analyses and visualizations can be found in the Appendix. Fig.\ref{fig6_2}(b) shows CDF plots on different overlap rate areas, which illustrates \method{} concentrates distribution within 5m error, exhibiting significant superiority. Fig.\ref{fig6_2}(c) and (d) illustrate the distribution of position and rotation errors, showing \method{} exhibits greater robustness.

The above comparison results shows our method achieves state-of-the-art performance on two UAV datasets. Additionally, we visualized the relocalization results of all methods on Laboratory\_3 (UAVLoc) in Fig.\ref{fig6} to provide a more intuitive comparison of their performance.

\noindent\textbf{Ablation Study: } 
As shown in Tab. \ref{table4}, to evaluate each component in \method{}~, we conducted model ablation experiments on UAVScenes (AMvalley) and UAVLoc (Laboratory).

\noindent\textbf{(1) Only replacing constant raw features by XYZ.} In this experiment, we converted the raw features back to classical XYZ relative coordinates. After this conversion, the features no longer remain invariant to rotations and altitude variations. \textbf{(2) Only removing Softmax-free in first LoSWAtt.} Removing Softmax-free module leads to a complete failure of our method, which evidence our design is critical and indispensable. \textbf{(3) Only replacing the LoSWAtt by Point transformer \cite{pointtv3}.} We replace the LoSWAtt with Point Transformer v3, which causes the model to lose its ability to encode locally invariant features, leading to a substantial decline of performance. \textbf{(4) Only removing Position Encoding in LoSWAtt.} Removing position encoding leads to the loss of this local invariance capability, thereby significantly weakening robustness of encoded features.
\section{Conclusion}


In this paper, we identify a critical gap in current research on map-free LiDAR relocalization and show that vehicle-based relocalization methods suffer substantial accuracy degradation when transferred to aerial scenarios. We propose \method{}, a scene coordinate regression-based framework tailored for UAV LiDAR relocalization that is highly robust to both azimuthal and altitude variations. To enable realistic evaluation and foster further progress in this area, we develop a UAV LiDAR system and collect a large-scale dataset with \textbf{irregular trajectories} and \textbf{extensive variations in rotation and altitude}. Extensive experiments demonstrate that our \method{} consistently outperforms existing methods, achieving approximately 1m median errors on two benchmarks.




\maketitlesupplementary

This appendix provides additional details on \method{}, the UAVLoc dataset, and extended experimental analyses.
Section~\ref{sec6} presents implementation details of \method{}, including parameter settings, complexity analysis, and the proposed invariant feature encoding. 
Section~\ref{sec7} provides further details and visualizations of the UAVLoc dataset. 
Section~\ref{sec8} reports additional experimental results, along with visualizations and analyses of failure cases. 
Finally, Section~\ref{fw} discusses potential directions for future work.

\section{\method{} Details}
\label{sec6}

\subsection{Parameter Settings}


We provide detailed parameter settings for \method{}, as shown in Fig. \ref{sup_fig_1}. In the preprocessing stage, we set the voxel size to 0.3 m and the pooling kernel size of the downsampling layer to $k=2$. In the Feature Initialization module, we configure the feature dimension of the Softmax-free LoSWAtt module to 64, the number of attention heads to 2, and the sliding-window size to 8. Subsequently, we use three Feature Encoding modules (i.e., $M=3$) to further encode the features. 
For each module, we set the pooling kernel size to $k=2$. The numbers of LoSWAtt blocks are $N=[2,2,4]$, with feature dimensions of 128, 256, and 512. The corresponding numbers of attention heads are 4, 8, and 16, and the sliding-window sizes are 8, 8, and 16. Finally, we employ six MLP layers (i.e., $T=6$), each with 1024 units, to regress the output coordinates. Detailed procedures are provided in Algorithm \ref{alg:method}. For the final pose estimation, we apply RANSAC \cite{RANSAC} with an inlier threshold of 0.6 m. We use all point-wise correspondences and enforce geometric consistency through edge-length (0.9) and distance-based checks. The RANSAC procedure runs with a maximum of 100,000 iterations and a confidence of 0.999. During training, we use the Adam optimizer~\cite{add_19} with a learning rate of 0.002 and a batch size of 120. More details of the protocol can be found in appendix.

\subsection{Complexity Analysis}

\begin{figure}
  \centering
    \includegraphics[width=1\linewidth]{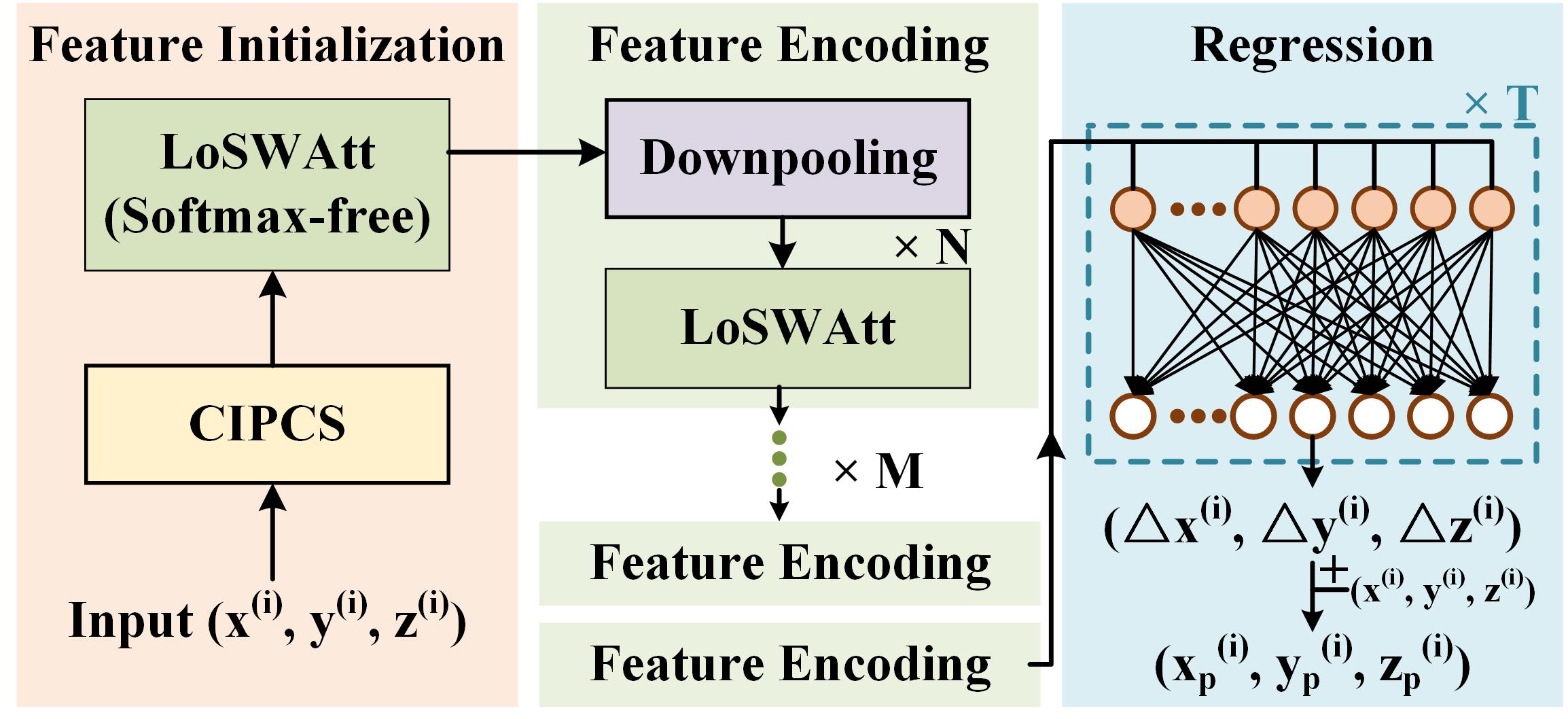}
  \caption{\textbf{The parameter settings of MAILS,} where $N =$ [2, 2, 4], $M =$ 3, $T =$ 6.}
\label{sup_fig_1}
\end{figure}

\begin{algorithm}[t]
\caption{\method{}}
\label{alg:method}
\begin{algorithmic}[1]
\REQUIRE Query point cloud $\mathbf{P}_t \in \mathbb{R}^{N\times3}$
\ENSURE Estimated UAV pose $p^*$

\STATE \textbf{Step 1: CIPCS}
\FOR{$i=1$ to $N$}
    \STATE $\mathbf{f}_i = \varphi(\mathbf{p}_i)$
\ENDFOR
\STATE $\mathbf{F}_{\text{seq}} = \text{Serialize}(\text{Downsample}(\{\mathbf{f}_i\}))$

\STATE \textbf{Step 2: LoSWAtt}
\STATE $N = |\mathbf{F}_{seq}|$
\FOR{$i=1$ to $N$}
    \STATE SW: $s_i = \{j \mid \max(1,i-k) \le j \le \min(\tilde{N}, i+k)\}$
    \STATE $F_i^w = \{\mathbf{f}_j \mid j \in s_i\}$
    \STATE $r_j = \mathbf{p}_j - \mathbf{p}_i, \quad \theta_j = \text{pitch}(\mathbf{p}_j), \quad j \in s_i$
    \STATE $\mathbf{Q}, \mathbf{K}, \mathbf{V} = P_{Q,K,V}(F_i^w), \quad$
    \STATE $\mathbf{Q}_2 = \varphi_2(\mathbf{f}_i)$
    \STATE $\mathbf{K}_2 = \varphi_3 \big[ (r_j, \theta_j) \mid j \in s_i \big]$
        \IF{i == 1}
            \STATE $\mathbf{f}'_i = (\frac{\mathbf{Q}\mathbf{K}^\top}{\sqrt{D}} + \frac{\mathbf{Q}_2 \mathbf{K}_2^\top}{\sqrt{D_2}}) \mathbf{V}$
        \ELSE
            \STATE $\mathbf{f}'_i = \text{Softmax}(\frac{\mathbf{Q}\mathbf{K}^\top}{\sqrt{D}} + \frac{\mathbf{Q}_2 \mathbf{K}_2^\top}{\sqrt{D_2}}) \mathbf{V}$
        \ENDIF
\ENDFOR
\STATE $\mathbf{F}' = \{\mathbf{f}'_i\}_{i=1}^N$

\STATE \textbf{Step 3: Coordinate Regression} \\
\STATE $\mathbf{Y}' = \{\text{MLP}_{\text{reg}}(\mathbf{f}'_i)\}_{i=1}^N$

\STATE \textbf{Step 4: Pose Estimation} \\
\STATE $p^* = \arg\min_{T_p \in \mathbb{R}^{4\times4}} \sum_{i=1}^{M} \|T_p \mathbf{p}_t^{(i)} - \mathbf{y}'_i\|_2$

\STATE \textbf{Step 5: Training Loss} \\
\STATE $\mathcal{L}_{L1} = \frac{1}{|\mathbf{P}_t|}\sum_i \|\mathbf{y}'_i - \mathbf{y}_i^*\|_{1}$

\RETURN $p^*$
\end{algorithmic}
\end{algorithm}

A standard self-attention layer over $n$ points requires computing all pairwise interactions, resulting in a quadratic complexity of $O(n^2C)$, which is prohibitive for UAV LiDAR frames where $n$ often exceeds 10K. Our LoSWAtt module exploits the spatial continuity introduced by space-filling–curve ordering and restricts attention to a local sliding window. For each point, we attend only to its $k$ preceding and $k$ following points along the ordered sequence, forming a window of size $2k{+}1$. Consequently, each query token is involved in at most $2k{+}1$ attention computations, and the computational complexity becomes:
\[
O\big((2k{+}1)nC\big),
\]
which is linear in $n$ and significantly more scalable than global attention. This yields an effective reduction factor of approximately $n/(2k{+}1)$ in the number of pairwise interactions. Meanwhile, the locality constraint introduced by the sliding window further preserves geometric structure, enabling the model to capture yaw- and altitude-robust features.

\subsection{Yaw and Altitude Robustness Encoding}

To analyze the invariance properties of \method{}, we adopt a token-level description that reduces dependence on absolute yaw and altitude. We first define the local geometric token, which is used to construct the positional-bias term $\mathbf{K}_2$ in LoSWAtt.

\begin{definition}[Local Geometric Token]
We initialize point features using a constant input $C$, which removes direct dependence on absolute XYZ coordinates. The resulting features are thus less sensitive to global transformations such as yaw rotations and altitude shifts. For each sliding window centered at index $i$, the local geometric token is
\[
G_i = \{ (r_j, \theta_j) \mid j \in s_i \},
\]
where
\[
s_i = \{j \mid \max(1,i-k) \le j \le \min(N,i+k)\},
\]
\[
r_j = \| (p_j - p_i)_{xy} \|_2,
\]
\[
\theta_j = \arctan\!\left(\frac{z_j - z_i}{\| (p_j - p_i)_{xy} \|_2}\right),
\]
and $p_i$ denotes the raw 3D coordinates. Both $r_j$ and $\theta_j$ depend only on relative geometry, making them approximately invariant to global translations, 
while $\theta_j$ is additionally invariant to yaw rotations.
\end{definition}

\begin{definition}[Feature Initialization]
Let $F_i^w$ denote the features extracted from CIPCS and serialized within the sliding window around $i$. Since the input features are initialized as a constant and processed by shared MLPs, the resulting features do not explicitly encode absolute coordinates, thereby reducing sensitivity to yaw and altitude transformations.
\end{definition}

\paragraph{Assumption (Bounded Perturbation).}
We assume that discretization operations such as voxelization and serialization introduce bounded perturbations under transformation $R$, i.e.,
\[
\|F_i^w(R \circ \mathbf{P}_t) - F_i^w(\mathbf{P}_t)\| \le \delta_1,
\]
\[
\|G_i(R \circ \mathbf{P}_t) - G_i(\mathbf{P}_t)\| \le \delta_2.
\]

\begin{proposition}[Yaw--Altitude Robustness of LoSWAtt]
For the LoSWAtt output at position $i$,
\[
\mathbf{f}_i' = \text{Softmax}\Big(\frac{\mathbf{Q} \mathbf{K}^\top}{\sqrt{D}} + \frac{\mathbf{Q}_2 \mathbf{K}_2^\top}{\sqrt{D_2}}\Big) V,
\]
the mapping $\Psi(F_i^w, G_i)$ is approximately invariant to yaw rotations and robust to altitude changes:
\[
\|\Psi(F_i^w(R \circ \mathbf{P}_t), G_i(R \circ \mathbf{P}_t)) 
- \Psi(F_i^w(\mathbf{P}_t), G_i(\mathbf{P}_t))\| \le \epsilon, 
\]
\[
\forall R \in SO(2)\times \mathbb{R},
\]
where $\epsilon = O(\delta_1 + \delta_2)$ and $R$ denotes a transformation composed of a planar rotation (yaw) and a vertical translation.
\end{proposition}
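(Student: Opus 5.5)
The argument treats $\Psi$ as a composition of Lipschitz maps on a bounded domain and then applies the Bounded Perturbation assumption. Write $\Psi(F,G)=\mathrm{Softmax}(S(F,G))\,V(F)$, where
\[
S(F,G)=\tfrac{1}{\sqrt{D}}P_Q(F)P_K(F)^\top+\tfrac{1}{\sqrt{D_2}}\varphi_2(f_i)\varphi_3(G)^\top .
\]
The projections $P_{Q,K,V}$ are linear, and $\varphi_2,\varphi_3$ are MLPs with Lipschitz activations. We work on a bounded set: the window has at most $2k{+}1$ tokens, and features and tokens lie in a ball of radius $B$. The last point is justified because $F_i^w$ comes from a constant input through fixed layers, and $r_j,\theta_j$ are bounded by the voxel extent and by $\pi/2$. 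On this set every building block has a finite Lipschitz constant depending only on the weight norms, $k$ and $B$.

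\textbf{Exact invariance of the token.} For $R=(\mathrm{Rot}_z(\alpha),t_z)$, we have $(Rp_j-Rp_i)_{xy}=\mathrm{Rot}_\alpha (p_j-p_i)_{xy}$ and $z_j-z_i$ is unchanged. Hence $r_j$ and $\theta_j$ are exactly invariant when the index set $s_i$ is unchanged. So the only discrepancy comes from the discretization captured by $\delta_1,\delta_2$ in the Assumption. This step explains why the bound involves nothing beyond $\delta_1,\delta_2$.

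\textbf{Lipschitz bounds for each piece.}
\begin{itemize}
\item Write $F'=F_i^w(R\circ\mathbf P_t)$, $F=F_i^w(\mathbf P_t)$, and similarly $G',G$. By the product rule,
\[
\|P_Q(F')P_K(F')^\top-P_Q(F)P_K(F)^\top\|\le 2\|W_Q\|\|W_K\|B\,\delta_1 .
\]
\item In the same way, the positional term satisfies
\[
\|\varphi_2(f_i')\varphi_3(G')^\top-\varphi_2(f_i)\varphi_3(G)^\top\|\le L_2 B_3\delta_1+B_2 L_3\delta_2 ,
\]
where $B_2,B_3$ bound the MLP outputs.
\item Together these give $\|S'-S\|\le c_1\delta_1+c_2\delta_2$.
\item Softmax is $1$-Lipschitz from $\ell_\infty$ to $\ell_1$ row-wise (its Jacobian $\mathrm{diag}(p)-pp^\top$ has norm at most $1$). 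So the attention matrix $A$ moves by at most $\sqrt{2k{+}1}\,(c_1\delta_1+c_2\delta_2)$.
\item Finally,
\[
\|A'V'-AV\|\le\|A'-A\|\,\|V'\|+\|A\|\,\|V'-V\| .
\]
Here $\|A\|\le\sqrt{2k{+}1}$ since rows are stochastic, and $\|V'-V\|\le\|W_V\|\delta_1$.
\end{itemize}
Collecting terms gives $\epsilon=C(\delta_1+\delta_2)$, with $C$ depending on the weight norms, $B$ and $k$ but not on $R$. This holds uniformly over $SO(2)\times\mathbb R$.

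\textbf{Main obstacle.} The hard step is the uniform boundedness $B$ needed for the bilinear terms. This is delicate for the first, Softmax-free layer, where the rows of $A$ are not stochastic and $\|A\|$ is bounded only through $\|S\|$. I would handle that layer separately: there the map is polynomial (bilinear in $S$ and $V$) on the bounded domain, so it is still locally Lipschitz with a constant depending on $B$. The second difficulty is that serialization may change the membership of $s_i$ under $R$. I would simply absorb this into $\delta_1,\delta_2$, as the Assumption allows, and not try to control it directly.
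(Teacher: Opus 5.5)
Your proposal is correct and follows the same route as the paper's proof. Both arguments rest on the same three ingredients: invariance of the local token $(r_j,\theta_j)$, the Bounded Perturbation assumption on $F_i^w$ and $G_i$, and Lipschitz continuity of each piece (the content term, the positional-bias term, the linear $V$, and the Softmax), which are then composed. You add explicit constants, show that the token is exactly invariant when $s_i$ is fixed, and correctly put the boundedness requirement on the bilinear terms $QK^\top$ and $Q_2\mathbf{K}_2^\top$ rather than on the Softmax, which is globally Lipschitz; the paper's sketch leaves these points implicit or misattributes them.
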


\begin{proof}
From Definition~1, the local geometric token $G_i$ is constructed using relative distances and pitch angles. The pitch angle $\theta$ is approximately invariant under yaw rotations, and the relative distance $r$ is unaffected by global translation along the vertical axis. Therefore, the positional-bias term $\mathbf{K}_2$ remains stable under such transformations.

From Definition~2, $F_i^w$ does not explicitly depend on absolute coordinates, which reduces its sensitivity to global transformations. Under the bounded perturbation assumption, both $F_i^w$ and $G_i$ change only slightly under $R$.

LoSWAtt consists of:
\begin{enumerate}
    \item a content term $QK^\top$ depending on $F_i^w$,
    \item a positional-bias term $Q_2 \mathbf{K}_2^\top$ depending on $G_i$.
\end{enumerate}
Since both terms are approximately invariant or stable under $R$, and $V$ is a linear transformation of $F_i^w$, the output $\mathbf{f}_i'$ varies smoothly with respect to its inputs. Moreover, the Softmax function is Lipschitz continuous over bounded inputs, which holds in our setting due to normalization by $\sqrt{D}$. Therefore, small perturbations in the inputs lead to bounded changes in the output, yielding the desired result.
\end{proof}

\begin{theorem}[Local Yaw--Altitude Robustness of \method{}]
For the full encoder $\Phi(\mathbf{P}_t)$ (CIPCS + LoSWAtt), and any $R\in SO(2)\times\mathbb{R}$,
\[
\|\Phi(R \circ \mathbf{P}_t) - \Phi(\mathbf{P}_t)\| \le \epsilon.
\]
\end{theorem}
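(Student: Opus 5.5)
The plan is to treat $\Phi$ as a composition $\Phi = \Psi_L \circ \cdots \circ \Psi_1 \circ \mathcal{C}$. Here $\mathcal{C}$ is the CIPCS stage: constant input, shared MLP $\varphi$, voxel downsampling and space-filling-curve serialization. Each $\Psi_\ell$ is one LoSWAtt block, or a pooling/downsampling layer between blocks. The argument then propagates the bounded-perturbation estimate through this chain by Lipschitz continuity, invoking the Proposition (Yaw--Altitude Robustness of LoSWAtt) for each block.

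\textbf{Step 1: the CIPCS stage.} Every point receives the same raw feature $C$, so $\varphi(C)=C_2$ is identical for $\mathbf{P}_t$ and $R\circ\mathbf{P}_t$. The only possible discrepancy comes from voxelization and serialization, which may change which points fall into a window. This is precisely what the Bounded Perturbation assumption controls. It gives $\|F_i^w(R\circ\mathbf{P}_t)-F_i^w(\mathbf{P}_t)\|\le\delta_1$ and $\|G_i(R\circ\mathbf{P}_t)-G_i(\mathbf{P}_t)\|\le\delta_2$ for every window $i$.

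Using Definition~1, $G_i$ depends only on $r_j=\|(p_j-p_i)_{xy}\|$ and on $\theta_j$, which is built from $z_j-z_i$ and $r_j$. For any $R\in SO(2)\times\mathbb{R}$, the coordinate differences transform as $(p_j-p_i)_{xy}\mapsto R_{\text{yaw}}(p_j-p_i)_{xy}$ and $z_j-z_i\mapsto z_j-z_i$. Hence $G_i$ is \emph{exactly} invariant once the window membership is fixed, and $\delta_2$ arises only from discretization.

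\textbf{Step 2: the first, Softmax-free block.} This block is $S\mapsto SV$ with $S$ bilinear in $(F_i^w, \mathbf{K}_2)$. On a bounded domain of features and tokens, say $\|F\|\le B$ (finite window, bounded coordinates within the LiDAR range), a bilinear map is Lipschitz. Its constant is of order $\|P_Q\|\|P_K\|\|P_V\|B^2/\sqrt{D}$ plus the analogous term for $\varphi_2,\varphi_3$. The output perturbation is therefore $O(\delta_1+\delta_2)$.

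\textbf{Step 3: the Softmax blocks.} For each subsequent block, apply the Proposition directly. Its proof actually shows a Lipschitz bound: the output change is at most $L_\ell\,(\text{input feature perturbation}+\delta_2)$. Softmax is $1$-Lipschitz in $\ell_2$, and the logits are bounded through the $1/\sqrt{D}$ scaling on the bounded domain.

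The remaining layers are pooling/downsampling, which is nonexpansive, and the final MLPs, which are Lipschitz since they are composed of linear maps and $1$-Lipschitz activations. Chaining everything gives
\[
\|\Phi(R\circ\mathbf{P}_t)-\Phi(\mathbf{P}_t)\|\le \Big(\prod_\ell L_\ell\Big)\,c\,(\delta_1+\delta_2)=:\epsilon,
\]
with the $\delta_2$ contributions summed across layers. This yields $\epsilon=O(\delta_1+\delta_2)$.

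\textbf{Main obstacle.} The delicate point is Step 1 together with the pooling layers. Serialization is discontinuous: a small rotation can reorder the Hilbert/Morton sequence and swap window contents. A plain Lipschitz argument cannot absorb this, so I would need to interpret $\delta_1,\delta_2$ as bounding the perturbation after a matching of windows, i.e.\ up to a permutation of tokens.

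I would first try to show that LoSWAtt is permutation-equivariant within a window: the attention is a sum over $j\in s_i$, and $G_i$ is indexed consistently with it. With that in hand, the assumption only needs to control the symmetric difference of window memberships, and each block maps matched windows to matched windows. This must also be verified at every downsampling level, since each level re-serializes the points. I would handle that by re-invoking the Bounded Perturbation assumption at each scale, which merely enlarges the constant in $\epsilon$.
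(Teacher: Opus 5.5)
Your proposal is correct and follows essentially the same route as the paper: Proposition~1 applied block by block, the Bounded Perturbation assumption absorbing the voxelization and serialization effects, and closure of Lipschitz maps under composition, which gives $\epsilon = O(\delta_1+\delta_2)$. Your separate Lipschitz treatment of the Softmax-free first block (which Proposition~1 does not literally cover) and your reading of $\delta_1,\delta_2$ as bounds after matching windows, re-invoked at each scale, make explicit what the paper's proof compresses into ``these perturbations are bounded.''
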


\begin{proof}
Each local output $\mathbf{f}_i'$ is approximately invariant by Proposition~1. Although the sliding window construction and serialization may introduce minor variations under transformation $R$, these perturbations are bounded.

Moreover, each module in $\Phi$ (including MLPs and attention layers) is Lipschitz continuous with bounded constants. Since the composition of Lipschitz functions remains Lipschitz, these deviations do not accumulate significantly across layers.

Therefore, the overall encoder $\Phi(\mathbf{P}_t)$ preserves feature consistency under yaw rotations and altitude changes, leading to robust behavior:
\[
\|\Phi(R \circ \mathbf{P}_t) - \Phi(\mathbf{P}_t)\| \le \epsilon.
\]
\end{proof}

\subsection{Softmax-Free Design Stabilizes Feature Learning}

We provide a theoretical explanation for why the combination of constant feature initialization and the Softmax-free design leads to more stable and discriminative feature learning.

\paragraph{Constant Initialization and Feature Collapse.}
Due to constant initialization, the input features lack spatial diversity.
After shared projections, the resulting features exhibit limited variation:
\[
\mathrm{Var}(Q_i) \approx \epsilon, \quad \epsilon \to 0.
\]
Under this condition, the content-based attention term becomes:
$QK^\top$ exhibits low variance and limited spatial diversity, thus providing weak discriminative power.

\paragraph{Effect of Softmax Normalization.}
With standard attention, the Softmax is applied:
\[
A = \text{Softmax}(QK^\top + Q_2 \mathbf{K}_2^\top).
\]
Although the positional term $Q_2 \mathbf{K}_2^\top$ introduces variation, the Softmax normalizes attention scores into a probability simplex, compressing the dynamic range of attention differences. When the input logits have small variance, i.e., $\mathrm{Var}(QK^\top + Q_2K_2^\top) \to 0$, Softmax produces nearly uniform weights:
\[
A_{ij} \approx \frac{1}{|s_i|}
\]
This results in an \textit{over-smoothing effect}, where features from different points become less distinguishable. Consequently, the output becomes:
\[
\mathbf{f}_i' = A V \approx \text{mean}(V),
\]
which significantly reduces feature discriminability, leading to an over-smoothing effect, where node features converge toward their mean.

\paragraph{Softmax-Free Attention.}
Without Softmax, the positional term contributes additively and remains unsuppressed:
\[
\mathbf{f}_i' = (QK^\top)V + (Q_2 K_2^\top)V.
\]
In this case, the positional term directly contributes to the attention weights without normalization, allowing local geometric variations encoded in $\mathbf{K}_2$ to be preserved. Even when $QK^\top$ is approximately constant, the output is dominated by the positional term:
\[
\mathbf{f}_i' \approx (Q_2 K_2^\top)V.
\]
This mitigates feature collapse and enables the model to learn discriminative local geometric representations. This effect is particularly pronounced when the content term lacks diversity, as in our constant initialization setting.

\paragraph{Impact on Regression Stability.}
Since regression relies on separable feature representations, over-smoothed features lead to ambiguous mappings. With Softmax, the over-smoothed features lead to ambiguous representations and unstable regression. In contrast, the Softmax-free design preserves feature diversity, resulting in more stable and accurate coordinate regression.

This analysis explains the empirical observation in Tab. \ref{table4}, where removing the Softmax leads to substantial performance improvements in our setting.

\begin{table*}[]
\caption{\textbf{Summary of UAVLoc's sequences and their characteristics.}}
\label{summary_table}
\resizebox{1\linewidth}{!}{
\begin{tabular}{cccccc|l}
\hline
Scene names                                      & Sequence                           & LiDAR Frames & Duration (s) & Altitude Range (m) & Size ($km^2$)           & \multicolumn{1}{c}{Scenarios \& Characters}  \\ \hline
\multicolumn{1}{c|}{\multirow{4}{*}{Laboratory}} & \multicolumn{1}{c|}{Laboratory\_1} & 8.0k        &   561        & 48 - 53 & \multirow{4}{*}{$\sim$0.7} & laboratory park (Ascent altitude) \\
\multicolumn{1}{c|}{} & \multicolumn{1}{c|}{Laboratory\_2} & 5.9k & 330  & 44 - 47 &  & laboratory park             \\
\multicolumn{1}{c|}{} & \multicolumn{1}{c|}{Laboratory\_3} & 10.2k & 541 & 46 - 48 &  & laboratory park   \\
\multicolumn{1}{c|}{} & \multicolumn{1}{c|}{Laboratory\_4}  & 8.2k & 472 & 42 - 48 &                            & laboratory park (Reduce altitude) \\ \hline
\multicolumn{1}{c|}{\multirow{3}{*}{Campus}}     & \multicolumn{1}{c|}{Campus\_1}     & 13.4k         & 671 & 60 - 63 & \multirow{3}{*}{$\sim$0.6} & campus area                                  \\
\multicolumn{1}{c|}{}                            & \multicolumn{1}{c|}{Campus\_2}     & 8.6k         & 440 & 67 - 73  &                            & campus area (Ascent altitude)                                  \\
\multicolumn{1}{c|}{}                            & \multicolumn{1}{c|}{Campus\_3}     & 8.8k         & 560 &  53 - 58 &                            & campus area (Reduce altitude)              \\ \hline
\multicolumn{1}{c|}{\multirow{3}{*}{Town}}       & \multicolumn{1}{c|}{Town\_1}       & 9.3k         & 475          & 74 - 80            & \multirow{3}{*}{$\sim$1.0} & rural towns                                  \\
\multicolumn{1}{c|}{}                            & \multicolumn{1}{c|}{Town\_2}       & 10.7k        & 535          & 76 - 82            &                            & rural towns                                  \\
\multicolumn{1}{c|}{}                            & \multicolumn{1}{c|}{Town\_3}       & 10.7k        & 535          & 80 - 85            &                            & rural towns (Ascent altitude)                \\ \hline
\multicolumn{1}{c|}{\multirow{3}{*}{Road}}       & \multicolumn{1}{c|}{Road\_1}       & 8.7k         & 453          & 70 - 76            & \multirow{3}{*}{$\sim$1.7} & Interchanges \& Highways                     \\
\multicolumn{1}{c|}{}                            & \multicolumn{1}{c|}{Road\_2}       & 9.4k         & 472          & 71 - 74            &                            & Interchanges \& Highways                     \\
\multicolumn{1}{c|}{}                            & \multicolumn{1}{c|}{Road\_3}       & 9.8k         & 490          & 75 - 80            &                            & Interchanges \& Highways (Ascent   altitude) \\ \hline \hline
\multicolumn{2}{c|}{Total/Average}                                                    & 121.7k/9.4k  & 6535/503     & 40 - 85            & 4.0/1.0                    & \multicolumn{1}{c}{Irregular trajectories and varying altitudes}                        \\ \hline
\end{tabular}}
\end{table*}


\begin{figure*}
  \centering
    \includegraphics[width=1\linewidth]{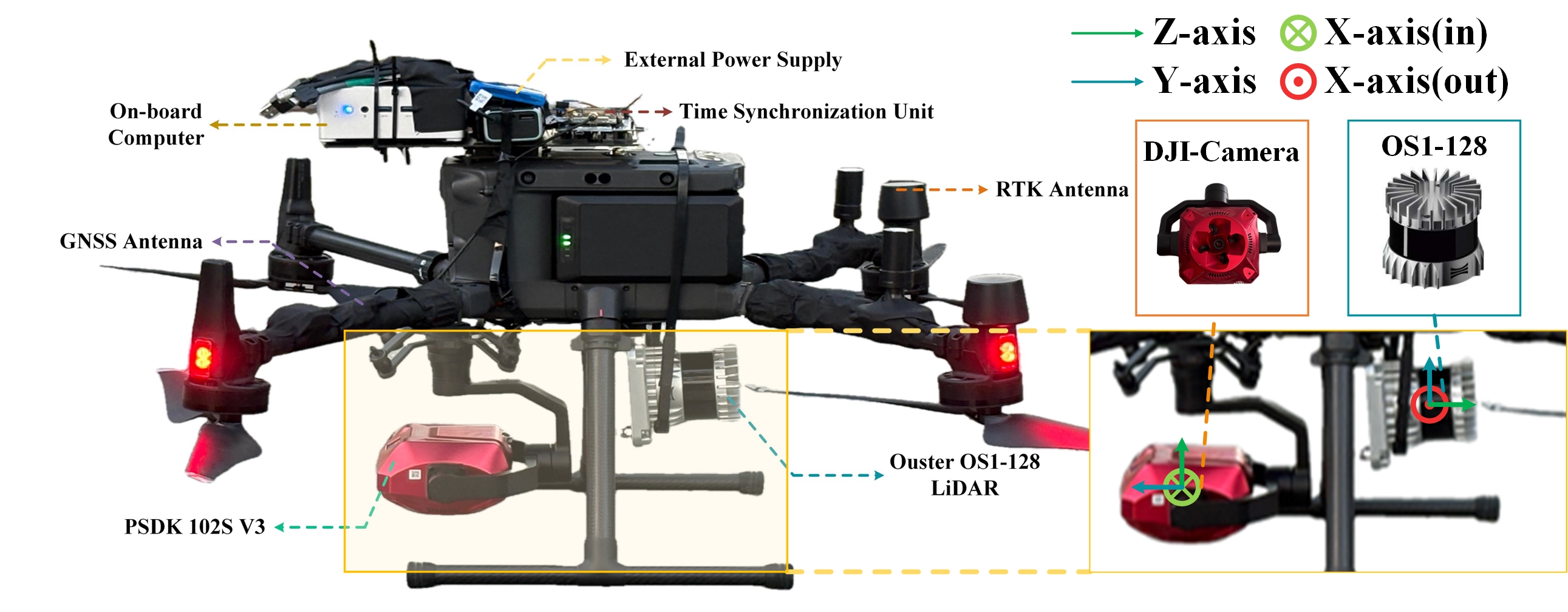}
  \caption{\textbf{Data collection platform and Sensors setup of UAVLoc.}}
\label{uav}
\end{figure*}

\section{UAVLoc Details}
\label{sec7}

Here, we provide additional details of the UAVLoc dataset, and a summary is presented in Tab. \ref{summary_table}. UAVLoc exhibits several distinctive characteristics that make it challenging and representative for UAV relocalization: 1) Multiple flight paths. 2) Irregular flight paths. 3) Altitude variation.

\textbf{Data Collection Platform: }
The data collection platform of UAVLoc is illustrated in Fig. \ref{uav}. All sensors are mounted on a DJI M300 RTK industrial UAV. The primary perception sensors include a PSDK 102 V3 camera and an Ouster OS1-128 LiDAR, which are synchronized using a hardware time-synchronization module. The on-board computer is used to control the UAV and collect the raw LiDAR point cloud data.

\noindent\textbf{Sensors Setup: } 
As shown in Fig. \ref{uav}, our dataset primarily comprises two sensors: (1) an Ouster OS1-128, a 128-beam spinning LiDAR used for point cloud acquisition, with its field of view (FOV), resolution, and frame rate (FPS) configured to [-22.5\degree, 22.5\degree], 20×1024, and 20 Hz, respectively. (2) A DJI camera directly records the UAV’s pose information and refines it through image-based optimization, yielding highly accurate UAV pose. 

\noindent\textbf{Target Environments: }
Our dataset comprises flight data collected across four representative environments: a laboratory park, a campus area, a town and a road area. These environments exhibit diverse structural characteristics and expansive fields of view. Furthermore, non-repetitive flight paths were performed in each environment to evaluate relocalization performance under realistic and unconstrained flight conditions, which are shown in Fig. \ref{fig4}(b). Additionally, to capture altitude variations during UAV operation, we collected at least one flight trajectories with varying altitudes in each scene. This enables the evaluation of relocalization robustness under altitude-varying conditions. Notably, such irregular flight paths and altitude changes pose substantial challenges for relocalization, while more faithfully reflecting UAVs' flight conditions.

\begin{figure*}
  \centering
    \includegraphics[width=1\linewidth]{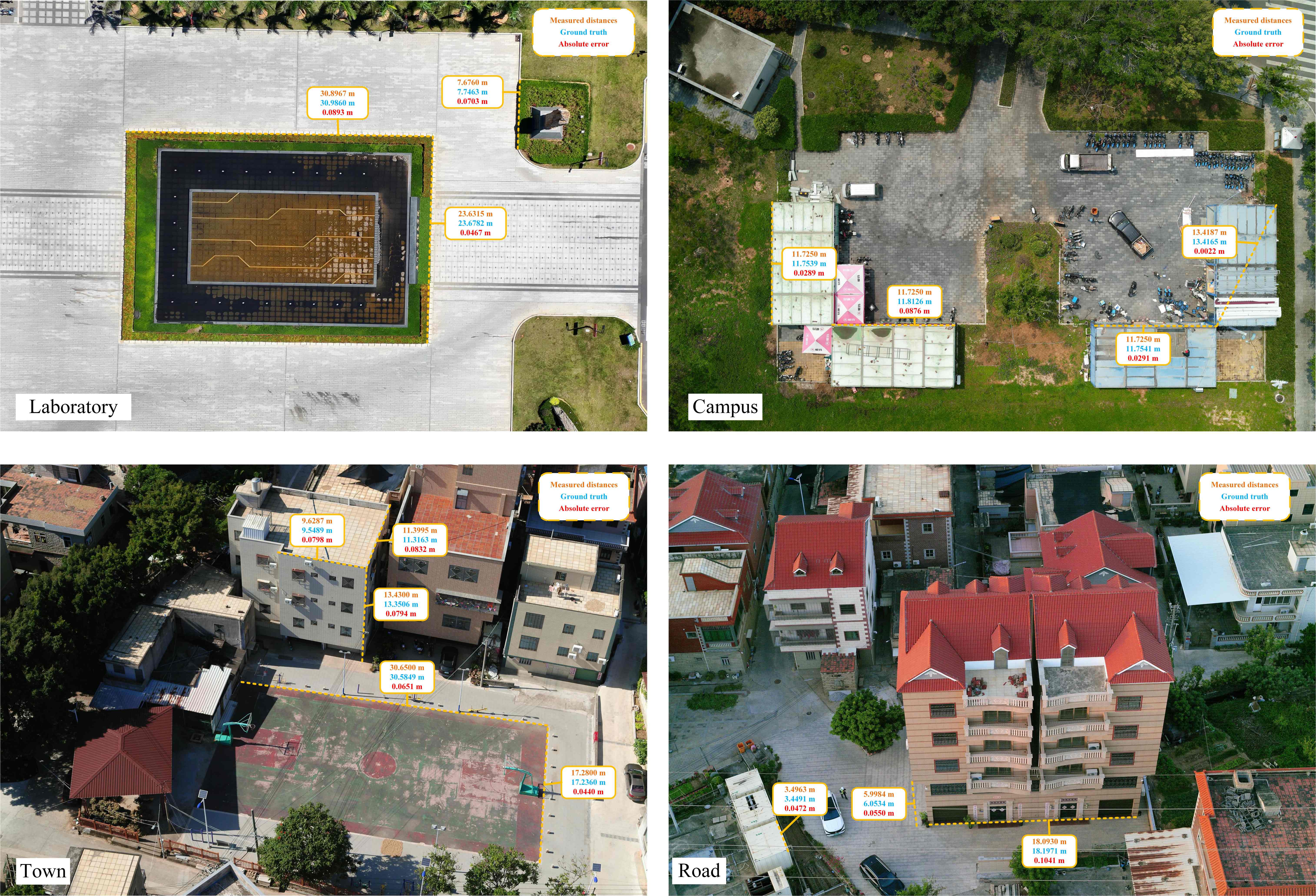}
  \caption{\textbf{Verification of our GT against actual measurement landmarks.} Measured distances refer to physical measurements. Ground truth refers to the values for the corresponding locations obtained from the pose ground truth provided in our dataset. Absolute error refers to the difference between the two values.}
\label{sup_fig_gt}
\end{figure*}

\textbf{Ground Truth Pose: }
In UAVLoc, we use only the LiDAR point clouds for relocalization. The camera data are processed with DJI Terra to obtain high-precision UAV poses, which are then aligned with the LiDAR frames through camera calibration and time synchronization. Moreover, due to inevitable UAV vibrations during flight, following UAVScenes~\cite{UAVScenes} and KITTI~\cite{KITTI}, we further refine the LiDAR poses by performing ICP \cite{ICP} between each LiDAR frame and the high-precision point cloud map generated by DJI Terra, yielding highly accurate LiDAR poses. This process enables us to derive an accurate 6-DoF ground-truth pose for every LiDAR scan.

\begin{figure*}
  \centering
    \includegraphics[width=1\linewidth]{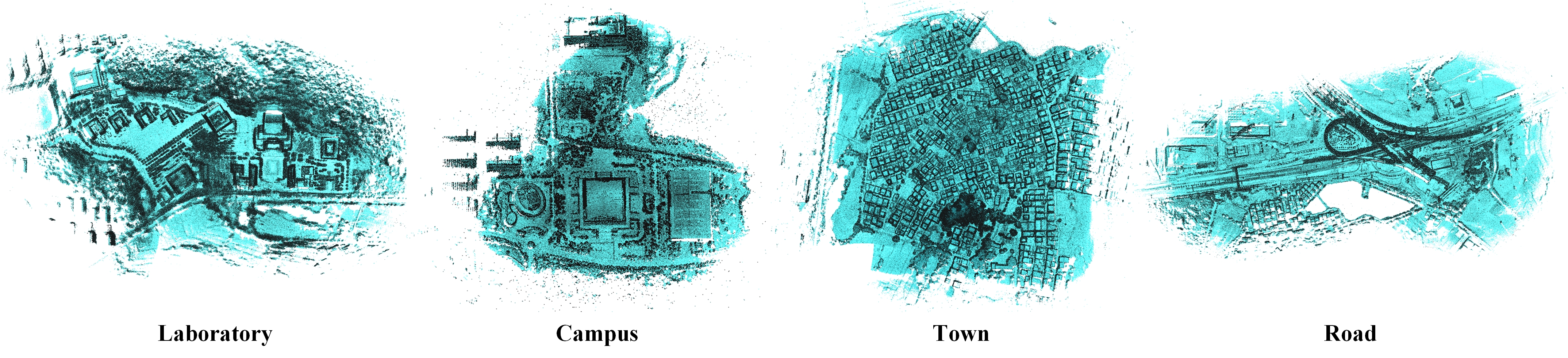}
  \caption{\textbf{Visualization of scene maps.} We used ground truths to reconstruct scene maps, demonstrating the accuracy of our ground truth.}
\label{map}
\end{figure*}

\begin{figure*}
  \centering
    \includegraphics[width=1\linewidth]{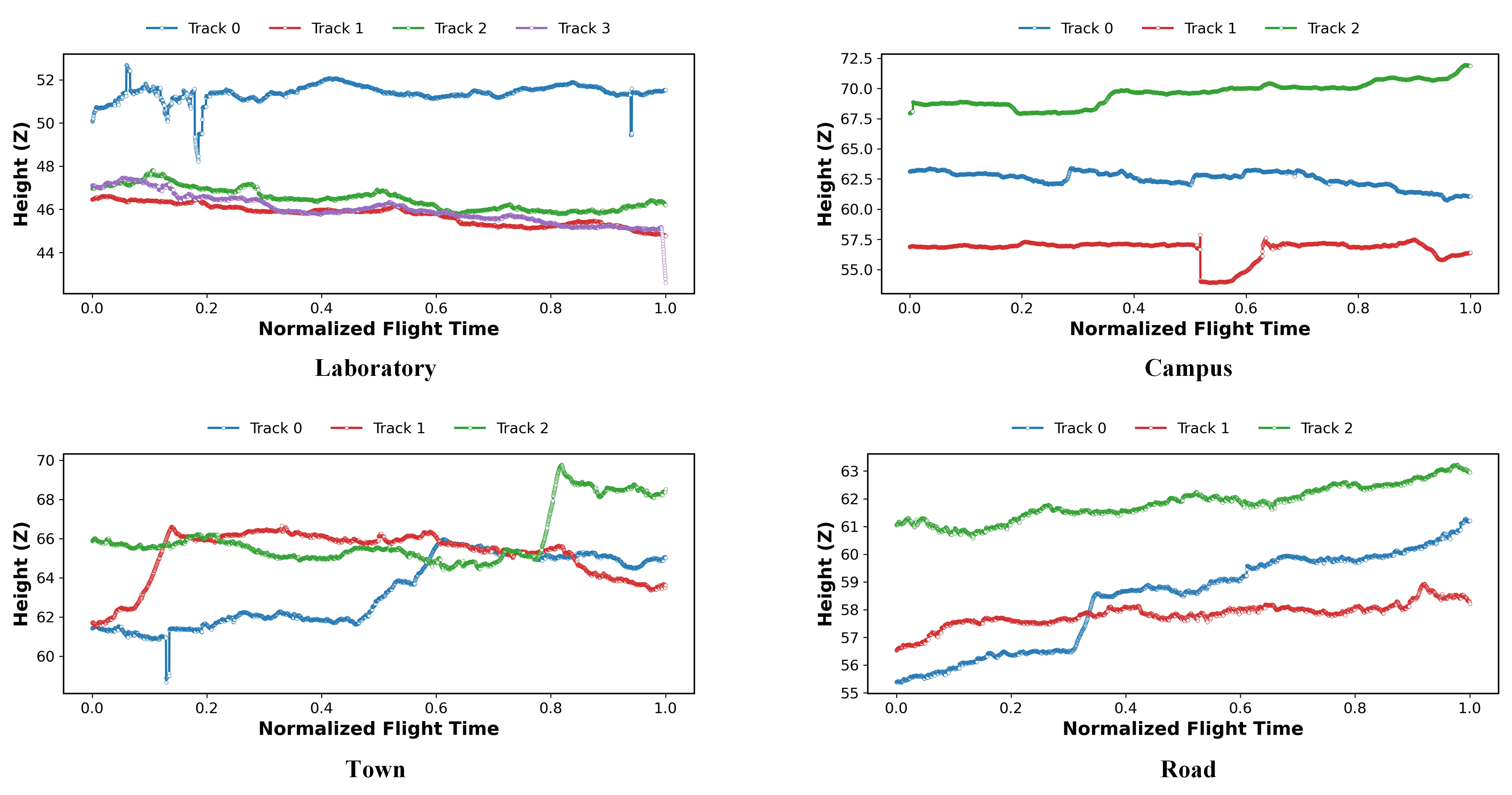}
  \caption{\textbf{Visualization of flight altitudes along different trajectories.} Normalized flight time: scales the flight duration of each trajectory to a common baseline of 1.}
\label{sup_fig_z}
\end{figure*}

\textbf{Accuracy of Ground Truth: }

Additionally, we reconstructed global maps for four scenes using the obtained poses. As shown in Fig. \ref{map}, the reconstructed maps demonstrate the high accuracy of our ground truth. Collectively, these results co nfirm the stability and reliability of our pose estimation pipeline across various environments.

\textbf{Irregular Flight Paths: }
UAVLoc includes multiple flights for each scene. Unlike existing datasets such as UAVScenes \cite{UAVScenes}, each flight in UAVLoc follows a unique and non-repetitive trajectory. As a result, the LiDAR scans from different flights share only partial spatial overlap with the training data, making the relocalization task significantly more challenging. Furthermore, in UAVLoc, the higher flight altitude places the LiDAR near its effective sensing range and reduces the field of view, further decreasing the overlap between scans.

Additionally, unlike UAVScenes, which is one of the most recent relocalization datasets, UAVLoc(ours) explicitly accounts for the impact of altitude variations during UAV flight on relocalization performance. As illustrated in Figure \ref{sup_fig_z}, altitude variations in UAVLoc manifest in two primary aspects: (1) Smooth altitude changes during flight. Due to factors such as air turbulence, UAVs continuously make small altitude corrections during operation. The collected UAVLoc data realistically captures and reflects this natural flight behavior. (2) In real-world operations, UAVs often operate across a wide range of altitudes. Unlike existing datasets such as UAVScenes, which maintain fixed flight altitudes, UAVLoc offers a more challenging and realistic benchmark for evaluating under varying altitude conditions.

\textbf{Licensing and Compliance: }
This work adheres to strict ethical and legal standards for geospatial data. UAVLoc was collected in Xiamen, China under formal approval; we will release it with (1) Licensing: The dataset is released under CC BY-NC-SA 4.0, while the source code follows the Apache 2.0 License. (2) Compliance: Data collection was conducted in compliance with local aviation and surveying regulations in Xiamen, China. All UAV operations were restricted to authorized altitudes and public areas. (3) Privacy: We have implemented a rigorous sanitization pipeline to redact sensitive information, such as random offset of coordinates and to reduce the risk of identifying sensitive infrastructure. (4) Governance: For any compliance inquiries or takedown requests, please contact the corresponding author. We maintain a permanent maintenance policy to respond to such requests promptly.

\section{Additional Experiments}
\label{sec8}

Here, we report the results of additional experiments on the UAVLoc dataset, including cost analysis, data volume, sequence splits, memory efficiency, training and inference time, and robustness evaluations.

\begin{table*}[t]
\centering
\caption{Cost report and sequence split of \method{} on UAVLoc and UAVScenes. }
\label{tab:costreport}
\renewcommand{\arraystretch}{0.95}
\resizebox{1\linewidth}{!}{
\begin{tabular}{cc|cc|cccc}
\hline
\textbf{Dataset} & Scene  & Train seq. & Test seq. & Map size/Model size & Train/Infer time & Frame (train / test) \\ \hline
\multirow{4}{*}{Ours} 
& Laboratory  & 01, 02 & 03, 04 & 5232 MB / \textbf{35.7} MB & 19h / 18ms  & 32,513 (13,980 / 18,533) \\
& Campus     & 01, 02 & 03  & 3676 MB / \textbf{35.7} MB & 20h / 21ms  & 30,783 (21,988 / 8,795) \\
& Town       & 01, 02 & 03  & 4454 MB / \textbf{35.7} MB     & 25h / 15ms & 30,672 (19,970 / 10,702) \\
& Road        & 01, 02 & 03 & 3973 MB / \textbf{35.7} MB    & 27h / 26ms & 27,925 (18,118 / 9,807) \\ \hline
\multirow{4}{*}{UAVScenes} 
& HKairport    & 01, 02 & 03  & 490 MB / \textbf{35.7} MB   & 15h / 19ms & 14,151 (11,128 / 3,023) \\
& HKisland     & 01, 02 & 03  & 211 MB / \textbf{35.7} MB   & 14h / 20ms & 13,450 (10,459 / 2,991) \\
& AMtown      & 01, 02 & 03  & 1187 MB / \textbf{35.7} MB   & 27h / 22ms & 25,442 (19,843 / 5,599) \\
& AMvalley     & 01, 02 & 03  & 469 MB / \textbf{35.7} MB   & 24h / 19ms & 22,431 (17,626 / 4,805) \\ \hline
\end{tabular}}
\end{table*}

\textbf{Cost report and Protocol}

Tab.~\ref{tab:costreport} reports the storage, runtime, and data splits across sequences. Compared to explicit map-based pipelines that require several GB per scene, our method compresses each environment into a compact neural representation (35.7\,MB), reducing storage and communication overhead by two orders of magnitude while maintaining real-time inference (0.1\,s per frame). 

We follow the standard scene-specific training protocol adopted in prior relocalization methods, where models are trained per environment using designated sequences and evaluated on held-out data. The reported training time is a one-time offline cost, while inference is efficient. Although per-scene training is required, this is not unique to SCR-based methods, but also applies to recent retrieval-based approaches (e.g., EgoNN~\cite{egonn}, BEVPlace++~\cite{bevplace++}) and APR-based methods (e.g., DiffLoc~\cite{diffloc}), which likewise require scene adaptation or retraining. In practice, model updates can be handled via incremental fine-tuning rather than full retraining when new data in existing scenes become available.

Importantly, our approach trades explicit map storage for a compact learned representation: while both paradigms are scene-dependent, our method significantly reduces storage and bandwidth requirements, offering a more scalable alternative in resource-constrained scenarios. For completeness, we also report the number of frames used for training and testing, which reflects the data requirement per scene and enables a transparent comparison of system-level costs.

\textbf{Robustness Tests}

We evaluate the sensitivity of our method to voxelization size in Tab.~\ref{tab:voxel_ablation}. Across all evaluated datasets, the performance remains stable under a wide range of voxel sizes (0.2m–0.5m). In particular, the success rate varies within a narrow margin (typically <1\%), and both mean and median errors exhibit only minor fluctuations, indicating that the proposed method is robust to voxel resolution.

We observe that a moderate voxel size (0.3m) consistently achieves the best or near-best performance across all scenes, suggesting a favorable balance between geometric detail preservation and noise suppression. In contrast, smaller voxels (0.2m) introduce slightly higher noise sensitivity, while larger voxels (0.5m) may lead to minor loss of structural details, but neither causes significant degradation.

In addition, we report the inlier ratio at 0.3m, which remains consistently high across different environments (25\%–35\%). This demonstrates that the geometric correspondences established by our method are reliable and well-conditioned for pose estimation.

Overall, these results confirm that the proposed method is robust to voxelization choices and does not require fine-grained parameter tuning for different scenes, which is desirable for practical deployment.

\begin{table*}[t]
\centering
\caption{\textbf{Voxelization sensitivity and inlier ratio.} We report success rate (SR), mean error, and median error under different voxel sizes. The inlier ratio is reported at voxel size = 0.3m.}
\resizebox{1\linewidth}{!}{
\begin{tabular}{l|ccc|ccc|ccc||c}
\hline
\multirow{2}{*}{Datasets} 
& \multicolumn{3}{c|}{Voxel 0.2m} 
& \multicolumn{3}{c|}{Voxel 0.3m} 
& \multicolumn{3}{c||}{Voxel 0.5m} 
& Inlier ratio\\ 
\cline{2-10}

& SR & Mean & Median 
& SR & Mean & Median 
& SR & Mean & Median 
& (voxel 0.3m) \\ 
\hline

AMtown (UAVScenes) 
& 90.81\% & 1.54m,2.91\degree & 0.87m,1.10\degree 
& 91.57\% & 1.34m,2.68\degree & 0.84m,0.90\degree 
& 89.99\% & 1.59m,2.92\degree & 0.92m,1.14\degree 
& 35.38\% \\

Laboratory\_3 (UAVLoc) 
& 95.27\% & 1.51m,1.29\degree & 0.97m,0.91\degree
& 95.98\% & 1.34m,1.13\degree & 0.90m,0.83\degree
& 95.13\% & 1.50m,1.42\degree & 0.96m,1.02\degree 
& 29.57\% \\

Laboratory\_4 (UAVLoc) 
& 86.39\% & 2.34m,1.99\degree & 1.13m,1.05\degree
& 87.34\% & 2.18m,1.89\degree & 0.96m,0.87\degree
& 86.28\% & 2.39m,2.08\degree & 1.15m,1.17\degree 
& 25.11\% \\

\hline
\end{tabular}}
\label{tab:voxel_ablation}
\end{table*}

\begin{table*}[t]
\centering
\caption{\textbf{Effectiveness by LoSWAtt or Feature Engineering.} FE denotes our proposed feature engineering strategy. We report success rate (SR), mean error, and median error on different datasets.}
\resizebox{1\linewidth}{!}{
\begin{tabular}{l|ccc|ccc|ccc}
\hline
\multirow{2}{*}{Method} 
& \multicolumn{3}{c|}{AMtown (UAVScenes)} 
& \multicolumn{3}{c|}{Laboratory\_3 (UAVLoc)} 
& \multicolumn{3}{c}{Laboratory\_4 (UAVLoc)} \\ 
\cline{2-10}

& SR & Mean & Median 
& SR & Mean & Median 
& SR & Mean & Median \\ 
\hline

SGLoc 
& 52.58\% & 12.16m,7.56\degree & 5.77m,2.66\degree
& 32.47\% & 20.08m,15.16\degree & 7.53m,4.33\degree 
& 44.92\% & 23.46m,12.99\degree & 5.61m,2.53\degree \\

SGLoc + FE 
& 67.91\% & 9.22m,4.17\degree & 3.89m,1.93\degree
& 48.33\% & 13.57m,9.49\degree & 5.64m,3.15\degree 
& 58.99\% & 16.18m,9.23\degree & 5.01m,2.29\degree \\ \hline

RALoc 
& 63.78\% & 11.93m,7.11\degree & 3.02m,2.65\degree
& 59.90\% & 9.68m,7.33\degree & 3.38m,2.71\degree
& 51.48\% & 10.78m,9.67\degree & 4.58m,2.93\degree \\

RALoc + FE 
& 78.93\% & 7.66m,5.53\degree & 2.54m,1.97\degree
& 70.31\% & 6.70m,5.71\degree & 2.97m,2.40\degree
& 59.11\% & 8.42m,6.37\degree & 3.17m,2.44\degree \\ \hline

\textbf{Ours} 
& \textbf{91.57\%} & \textbf{1.34m,2.68\degree} & \textbf{0.84m,0.90\degree}
& \textbf{95.98\%} & \textbf{1.34m,1.13\degree} & \textbf{0.90m,0.83\degree}
& \textbf{87.34\%} & \textbf{2.18m,1.89\degree} & \textbf{0.96m,0.87\degree} \\

\hline
\end{tabular}}
\label{tab:fe_ablation}
\end{table*}

\textbf{Effectiveness by LoSWAtt or Feature Engineering}

To disentangle the contribution of the proposed feature engineering (FE) and the LoSWAtt architecture, we also conduct ablation experiments in the main text(Tab. \ref{table4}) to verify each module. Additionally, we conduct controlled experiments by replacing the raw XYZ inputs of existing SCR-based baselines with our FE, while keeping their original network architectures unchanged.

As shown in Tab.~\ref{tab:fe_ablation}, incorporating FE consistently improves the performance of both SGLoc and RALoc across all datasets. For instance, RALoc+FE improves the success rate from 63.78\% to 78.93\% on AMtown and reduces the mean error from 11.93m to 7.66m, demonstrating that the proposed features provide more informative and geometry-aware representations than raw coordinates.

However, despite these gains, all FE-enhanced baselines still remain significantly inferior to our full method. For example, on Laboratory\_3, RALoc+FE achieves 70.31\% SR with 6.70m mean error, whereas our method reaches 95.98\% SR with only 1.34m error. Similar margins are observed across all datasets.

These results indicate that while FE contributes to performance improvement, it is not sufficient to account for the overall gains. The substantial additional improvement of our method is attributed to the LoSWAtt architecture, which better exploits the structured geometric relationships encoded by FE. Therefore, the performance gain of our method arises from the synergy between feature representation and architecture design, rather than from feature engineering alone.

\textbf{Failure Case and Analysis}
\label{fc}

\begin{figure*}
  \centering
    \includegraphics[width=1\linewidth]{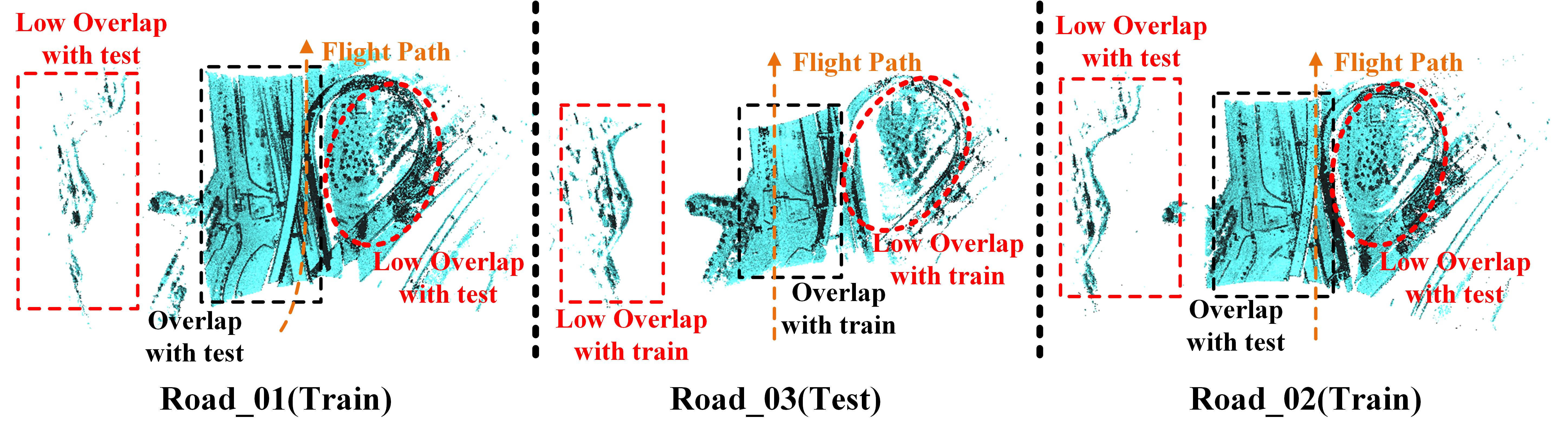}
  \caption{\textbf{Failure cases on UAVLoc (Road sequences).} 
We visualize representative train/test trajectory segments and their spatial overlap. 
Due to irregular UAV flight paths, the overlap between training and testing data can be significantly reduced. 
Regions highlighted in red indicate areas with low overlap, while black boxes denote overlapping regions. 
When the test trajectory deviates from the training distribution, only limited shared structures (e.g., the central roadway) remain, leading to degraded relocalization accuracy.}
\label{fail_2}
\end{figure*}

\begin{figure*}
  \centering
    \includegraphics[width=1\linewidth]{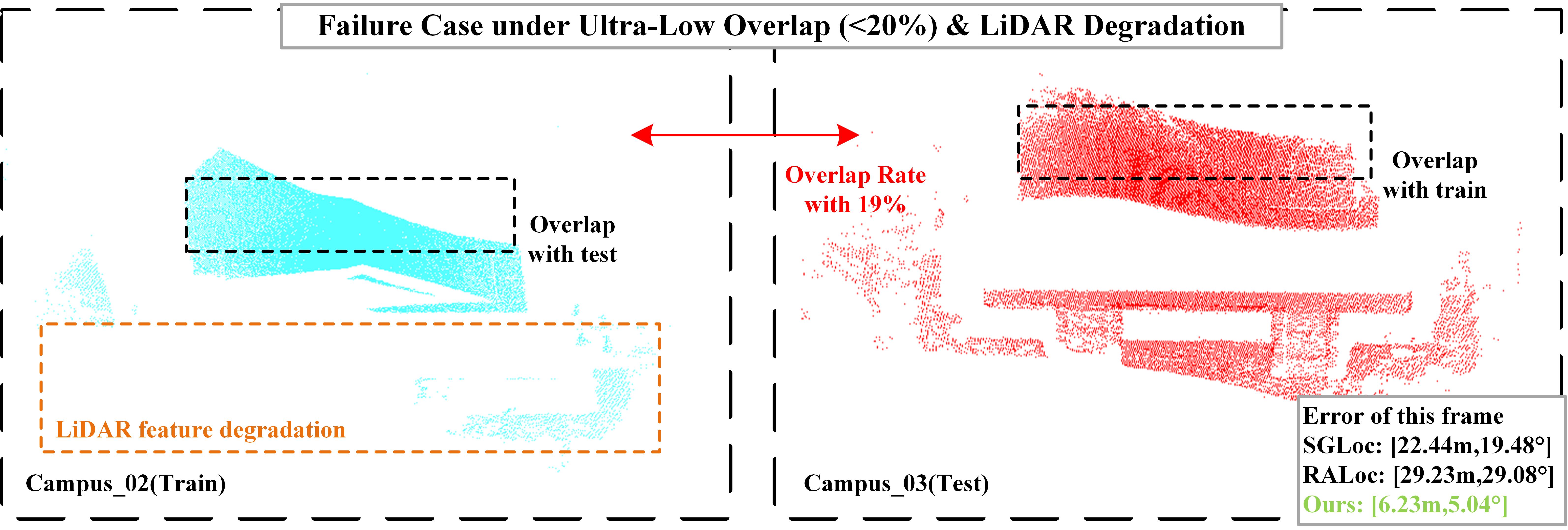}
  \caption{\textbf{Failure cases on UAVLoc (Campus sequences) under ultra-low overlap and LiDAR degradation.}}
\label{fail_1}
\end{figure*}

As shown in Fig.~\ref{fail_2}, we analyze failure cases on UAVLoc (Road sequences) to understand the limitations of map-free UAV relocalization under extreme conditions. Our analysis reveals a clear causal chain: UAV motion variations (e.g., yaw and altitude changes) lead to reduced spatial overlap between training and testing trajectories, which is strongly correlated with a reduced number of shared geometric structures and a corresponding degradation in matching reliability.

Fig.~\ref{fig4}(g) further confirms that larger yaw and altitude variations strongly correlate with lower overlap, indicating that irregular UAV flight patterns are a primary source of distribution shift in aerial relocalization scenarios. This also implies that viewpoint and scale changes induced by UAV motion are not independent factors, but jointly contribute to the collapse of spatial overlap. 

As shown in Fig.~\ref{fig6_2}(a), this reduction in overlap leads to a consistent and significant increase in relocalization error across all methods, particularly in the 0--40\% overlap regime, where the number of valid geometric correspondences becomes extremely limited and matching ambiguity increases sharply. In this low-overlap regime, only a small subset of stable structures (e.g., road surfaces and large planar regions) remains consistently observable across trajectories, while most contextual structures (e.g., buildings, roadside objects, and background geometry) become non-overlapping, as illustrated in Fig.~\ref{fail_2}. This structural sparsification significantly reduces the density and diversity of reliable geometric cues, making pose recovery highly under-constrained. Although our \method{} cannot fully resolve this ambiguity due to the intrinsic lack of sufficient shared structure, it remains more robust than competing methods by better exploiting sparse and locally consistent geometric patterns under low-overlap conditions.

To provide a comprehensive understanding of our method's boundaries, we also analyze representative failure cases from the Campus sequences in UAVLoc, as illustrated in Fig.~\ref{fail_1} which faces \textbf{ultra-low spatial overlap} coupled with \textbf{geometric degeneration}. As visualized in Fig.~\ref{fail_1}, the test UAV executes a severe shift, where the overlap ratio drops below $20\%$. From a geometric perspective, this shift manifests as a collapse of observable shared structures. Meanwhile, the high-altitude viewpoint and significant lateral displacement cause most discriminative vertical structures (e.g., building facades) to be occluded. In this failure cases, three methods SGLoc, RALoc, and \method{} all exhibited increased errors at varying levels. However, \method{} demonstrated greater robustness in this extreme scenario, with an error of [6.23 m, 5.04\degree]. This suggests that our \textit{Softmax-free design} and \textit{Coordinate-independent Initialization} effectively prevent the model from outputting catastrophic outliers by maintaining a focus on the few remaining stable local geometric patterns, even when a global consensus is difficult to reach. The error distributions in Fig. \ref{fig6_2}(c) and (d) also demonstrates the robustness of \method{} in such situations.

\section{Future Work and Discussion}
\label{fw}

The above analysis highlights several fundamental challenges in UAV relocalization, which also point to important directions for future improvements. 
We discuss these aspects from the perspectives of representation, generalization, and system-level formulation.

\textbf{Map-free formulation and implicit scene representation.}
In this work, we emphasize that \textbf{``map-free''} here specifically refers to the absence of explicit geometric structures at inference time, rather than the absence of learned scene priors, following prior works~\cite{map_free,RALoc,SGLoc}. 
The scene prior is implicitly encoded in the network parameters through scene-specific training. 
Therefore, our formulation is more accurately viewed as an \textbf{implicit-map-based localization paradigm}, rather than a strict elimination of environmental priors.

From a broader perspective, our method belongs to the family of \textbf{scene coordinate regression (SCR)} approaches, which can be interpreted as learning an \emph{implicit scene representation} that maps observations to 3D geometry. 
In this sense, SCR methods inherently construct an \textbf{implicit neural map}, even though no explicit map structure is materialized.

This establishes a clearer relationship between different paradigms:
\emph{explicit mapping} (e.g., LiDAR SLAM) maintains geometric structures such as point clouds or voxel grids;
\emph{implicit mapping} (e.g., PIN-SLAM~\cite{pinslam} and SCR-based methods) encodes the scene in neural representations;
and \emph{map-free inference} refers specifically to the fact that no explicit map needs to be stored or accessed at test time.

Under this view, these paradigms are not mutually exclusive categories.
For example, recent works such as PIN-SLAM~\cite{pinslam} jointly learn implicit neural maps while performing localization, illustrating that the boundary between mapping and localization is increasingly blurred. 
Our approach avoids maintaining an explicit or continuously updated map at inference time. Moreover, as shown in Tab.~\ref{table2} and Tab.~\ref{table3}, our method achieves competitive or stronger performance in global relocalization compared to PIN-SLAM.
We attribute this difference to the fact that SLAM-based systems may be affected by drift accumulation in large-scale or weakly constrained environments, whereas our formulation directly optimizes for global relocalization accuracy.

Although this comes with a one-time per-scene training cost, this requirement is shared by many existing relocalization paradigms, including scene coordinate regression (SCR), retrieval-based methods (e.g., EgoNN~\cite{egonn}, BEVPlace++~\cite{bevplace++}), and absolute pose regression approaches (e.g., DiffLoc~\cite{diffloc}), all of which rely on scene-specific adaptation or retraining.

In practice, such costs can be amortized, as models can be incrementally updated when new data becomes available, rather than retrained from scratch. Prior works (e.g., LightLoc~\cite{Lightloc}) have explored such incremental update schemes. 
Moreover, as shown in Tab.~\ref{tab:costreport}, our method already provides a favorable tradeoff by compressing each scene into a compact neural representation (35.7\,MB), significantly reducing storage and communication overhead compared to explicit map-based pipelines.

Meanwhile, as shown in \ref{tab:costreport}, \method{} achieves an inference latency of approximately 20 ms on an NVIDIA GeForce RTX 5090 GPU. To further evaluate its computational efficiency under more realistic deployment settings, we also report results on an NVIDIA GeForce RTX 3090 GPU, where it achieves an average inference time of 35 ms on the UAVLoc dataset. These results indicate that our method maintains reasonable computational efficiency under different hardware configurations, which is important for potential UAV edge deployment scenarios with limited computational resources. However, we note that real-world UAV platforms may exhibit more diverse and stricter hardware constraints. In future work, we will explore further model compression and optimization techniques, and evaluate the method on embedded or onboard UAV computing platforms.

We also note that practical systems may adopt intermediate designs between explicit and implicit representations.
For instance, methods such as OPAL~\cite{Opal} leverage lightweight geographic priors (e.g., OpenStreetMap) to enable efficient localization from sparse observations.
However, such approaches rely on the availability and quality of external map priors, which may be incomplete, outdated, or unavailable in certain environments (e.g., rural or rapidly changing areas), and may introduce additional assumptions on data sources.
In contrast, our method does not depend on any external geographic prior and instead learns scene representations directly from sensor data, making it more generally applicable across diverse environments.
Exploring hybrid strategies that combine lightweight priors with learned representations may further improve the balance between efficiency, scalability, and generalization.

\textbf{Robustness to full 6-DoF transformations.}
While our current design explicitly models yaw and altitude variations through relative geometric encoding, UAV platforms inherently undergo full 6-DoF motion, including roll and pitch. 
Our formulation achieves partial robustness in practice because local geometric relations (e.g., relative distances and pitch angles) remain stable under moderate viewpoint changes. 
However, it does not explicitly enforce equivariance over the full $\mathrm{SO}(3)$ space.

As a result, performance may degrade under large roll/pitch variations or aggressive maneuvers. 
To address this limitation, future work will explore rotation-equivariant representations and canonicalization strategies, enabling principled handling of arbitrary 3D transformations and improving robustness in highly dynamic UAV scenarios.

\textbf{Generalization under limited spatial overlap.}
As demonstrated in the failure cases, relocalization performance degrades when the overlap between training and testing trajectories becomes extremely limited. 
This issue stems from the inherent distribution shift introduced by irregular UAV flight paths.

While our method partially alleviates this issue through local geometric encoding and structured attention, it still relies on sufficient shared spatial context. 
Future work will investigate mechanisms for enhancing global structural reasoning, such as cross-view context aggregation and long-range geometric consistency modeling, allowing the model to infer correspondences even under sparse or asymmetric overlap.

\textbf{Learning under broader trajectory distributions.}
Finally, our current training protocol remains scene-specific and trajectory-dependent. 
To improve generalization, we will explore self-supervised pretraining and cross-scene learning strategies that expose the model to more diverse environments, viewpoints, and motion patterns. 
This may reduce reliance on dense per-scene supervision and improve robustness to unseen trajectories.

Overall, these directions aim to better bridge implicit and explicit representations, improve robustness under full 6-DoF motion, and enhance generalization in real-world UAV relocalization systems.

\end{document}